%%%%%%%% ICML 2020 EXAMPLE LATEX SUBMISSION FILE %%%%%%%%%%%%%%%%%

\documentclass{article}

\usepackage{arxiv}
\usepackage[utf8]{inputenc} % allow utf-8 input
\usepackage[T1]{fontenc}    % use 8-bit T1 fonts
\usepackage{hyperref}       % hyperlinks
\usepackage{url}            % simple URL typesetting
\usepackage{booktabs}       % professional-quality tables
\usepackage{amsfonts}       % blackboard math symbols
\usepackage{nicefrac}       % compact symbols for 1/2, etc.
\usepackage{microtype}      % microtypography
\usepackage{lipsum}
\usepackage{graphicx}
\usepackage{subfigure}
\usepackage{booktabs} % for professional tables
\usepackage{amsmath,amsfonts}
\usepackage{color}
\usepackage{hyperref}
\usepackage{amsthm}
\usepackage{verbatim}

\newtheorem{theorem}{Theorem}[section]
\newtheorem{corollary}{Corollary}[theorem]

% Attempt to make hyperref and algorithmic work together better:

\newcommand{\M}{\mathbf}
\title{On Coresets for Regularized Regression}

\author{
  Rachit Chhaya\\
  IIT Gandhinagar\\
  \texttt{rachit.chhaya@iitgn.ac.in} \\
  %% examples of more authors
   \And
 Anirban Dasgupta \\
  IIT Gandhinagar\\
  \texttt{anirbandg@iitgn.ac.in} \\
  \AND
   Supratim Shit \\
  IIT Gandhinagar\\
  \texttt{supratim.shit@iitgn.ac.in} \\
  %% \AND
  %% Coauthor \\
  %% Affiliation \\
  %% Address \\
  %% \texttt{email} \\
  %% \And
  %% Coauthor \\
  %% Affiliation \\
  %% Address \\
  %% \texttt{email} \\
  %% \And
  %% Coauthor \\
  %% Affiliation \\
  %% Address \\
  %% \texttt{email} \\
}

\begin{document}
\nocite{*}
\maketitle

% this must go after the closing bracket ] following \twocolumn[ ...

% This command actually creates the footnote in the first column
% listing the affiliations and the copyright notice.
% The command takes one argument, which is text to display at the start of the footnote.
% The \icmlEqualContribution command is standard text for equal contribution.
% Remove it (just {}) if you do not need this facility.
%\printAffiliationsAndNotice{}  % leave blank if no need to mention equal contribution
\begin{abstract}
We study the effect of norm based regularization on the size of coresets for regression problems. Specifically, given a matrix $ \mathbf{A} \in {\mathbb{R}}^{n \times d}$ with $n\gg d$ and a vector $\mathbf{b} \in \mathbb{R} ^ n $ and $\lambda > 0$, we analyze the size of coresets for regularized versions of regression of the form $\|\mathbf{Ax}-\mathbf{b}\|_p^r + \lambda\|{\mathbf{x}}\|_q^s$ . Prior work has shown that for ridge regression (where $p,q,r,s=2$) we can obtain a coreset that is smaller than the coreset for the unregularized counterpart i.e. least squares regression~\cite{avron2017sharper}. We show that when $r \neq s$, no coreset for regularized regression can have size smaller than the optimal coreset of the unregularized version. The well known lasso problem falls under this category and hence does not allow a coreset smaller than the one for least squares regression. We propose a modified version of the lasso problem and obtain for it a coreset of size smaller than the least square regression. We empirically show that the modified version of lasso also induces sparsity in solution, similar to the original lasso. We also obtain smaller coresets for $\ell_p$ regression with $\ell_p$ regularization. We extend our methods to multi response regularized regression. Finally, we empirically demonstrate the coreset performance for the modified lasso and the $\ell_1$ regression with $\ell_1$  regularization.
\end{abstract}
\section{Introduction}
Most applications of machine learning require huge amounts of data to train models and hence computational efficiency is a cause of concern. A common strategy is to train the model on a judiciously selected subsample of the data. A coreset~\cite{har2004coresets,bachem2017practical} is a subsample of appropriately reweighted points from the original data which can be used to train models with competitive accuracy and provable guarantees. The size of a coreset is usually independent of the size of the original dataset making training on them much quicker.

Regression is a widely used technique in machine learning and statistics, the most popular variants being the least square regression ($\ell_2$ regression) and least absolute deviation ($\ell_1$ regression).  A coreset construction for $\ell_2$ regression based on leverage scores is given in \cite{drineas2006sampling} whereas coresets for $\ell_p$ regression have been created based either on norms of the so called well-conditioned basis~\cite{dasgupta2009sampling,sohler2011subspace} or based on Lewis weights~\cite{cohen2015p}.

A common variant of regression is to use regularization, meant to either achieve numerical stability, to prevent overfitting or to induce sparsity in the solution-- ridge and lasso being the most commonly used regularizers. Since regularization imposes a constraint on the solution space, we can potentially expect regularized problems to have a smaller size coreset, than the unregularized version. Indeed, this intuition has been formalized in the case of ridge regression~\cite{woodruff2014sketching} using the ridge leverage scores. Pilanci et al.~\cite{pilanci2015randomized} construct small sized coresets for constrained version of lasso using random projections; these coresets are, however, not (scaled) subsample of data points. To the best of our knowledge ours is the first work to study coresets for $\ell_p$ regularized regression for $p\neq 2$. Our first result is negative-- we show that 
it is not always possible to build smaller coresets for regularized regression. For a specific class of problems we show that smaller coresets are possible and we show how to construct them.

To construct such coresets, we follow the sensitivity framework given in \cite{feldman2011unified}. Sensitivities, defined in~\cite{langberg2010universal}, capture the importance of a data point for a particular optimization function. If we sample points using a probability distribution based on sensitivities, the sample size depends on the sum of the sensitivities and the dimension of the solution space. The core idea behind our bounds is that due to regularization, the sensitivities of points change -- while the sensitivity of very low sensitivity points might increase slightly (by additive $1/n$), the higher sensitivities are pulled down. The overall effect is that 
the sum of sensitivities for a regularized version of regression is less than the sum of sensitivities for its unregularized counterpart by a factor that depends on the value of the regularizer. 

\subsection{Our Contributions}
\begin{itemize}
  %$p,q \geq 1$ and $r,s > 0$
	\item We first show that for any regularized problem of the form $\min _{\mathbf{x} \in \mathbb{R}^d} \|\mathbf{Ax}\|_p^r + \lambda \|\M{x}\|_q^s $, where $r \neq s$, a coreset for the problem also works as a coreset for its unregularized version. This implies that when $r \neq s$, we cannot build coresets with size smaller than the ones for the unregularized version.
	\item We introduce a modified version of lasso regression for which we show that we can construct coresets with size smaller than that of the unregularized linear regression.
	\item We calculate sensitivity upper bounds for the  $\ell_p $ regression with $\ell_p$ regularization. We focus on $p \geq 1, p\neq 2$. Specifically we give smaller coreset for the regularized least deviation problem i.e. for $p=1$.
	\item We show experimental evidence that the modified lasso problem also preserves sparsity like the lasso problem and hence is a suitable replacement. We also demonstrate the empirical performance of our sampling probabilities for the modified lasso problem and the regularized least deviation problem.
\end{itemize}
\subsection{Organization of the Paper}
The rest of the paper is organized in the following manner. In section 2 we discuss work in the areas related to coresets in general as well as coresets specifically for regularized problems and distinguish our work from existing work. In section 3 we provide all the notations, definitions and existing results that we use throughout the paper. In section 4 we give our main result relating coresets for regularized regression to the ones for unregularized regression. In section 5 we introduce the modified lasso  problem and analyze the size of coreset for it. In section 6 we show smaller coresets for $\ell_p$ regression with $\ell_p$ regularization. In section 7 we report the experiments validating our claims and  conclude in section 8 with discussion on future scope in this area.
\section{Related Work}
Coresets are small summaries of data which 
can be used as a proxy to the original data with provable guarantees. The term was first introduced in \cite{agarwal2004approximating} where they used coresets for the shape fitting problem. Coresets for clustering problem were described in~\cite{har2004coresets}. 
In \cite{feldman2011unified} authors gave a generalized framework to construct coresets based on importance sampling using sensitivity scores introduced in \cite{langberg2010universal} which was improved in \cite{braverman2016new} for both offline and streaming settings.

There is a large amount of work to reduce size of the data for the $\ell_2$ regression problem \cite{drineas2011faster,clarkson2017low}. Interested readers may refer to a good survey \cite{woodruff2014sketching} and references therein. Coresets for $\ell_2$ regression can be obtained using the popular leverage scores~\cite{drineas2006sampling}. Work has also been done to obtain these leverage scores in an efficient manner \cite{drineas2012fast}. Practical effectiveness of coresets for least square problems has been recently shown by \cite{maalouf2019fast} compared to available libraries like scikit-learn. Coresets for $\ell_p$ regression are obtained using the row norms of the well-conditioned basis~\cite{dasgupta2009sampling} or the Lewis weights~\cite{cohen2015p}. Other works that construct coresets for $\ell_p$ regression in general or $\ell_1$ regression in particular include \cite{sohler2011subspace,meng2013low,woodruff2013subspace,clarkson2016fast,dickens2018leveraging}. 

However not much has been done for regularized version of the regression problem. Pilanci et al.~\cite{pilanci2015randomized} are able to reduce the size of the popular lasso problem using random projections. However they work with the constrained version of the problem and not the regularized one. Also they do not obtain a sample of the original dataset. Reddi et al.~\cite{reddi2015communication} obtained additive error coresets for empirical loss minimization. Coresets using local sensitivity scores \cite{raj2019importance} also work for functions with a regularization term. Here they use a quadratic approximation of the function and then use leverage scores to approximate sensitivities locally. Tolochinsky et al. \cite{tolochinsky2018coresets} actually add a regularization term to obtain coresets for functions for which otherwise sublinear coresets may not exist.

Coresets for logistic regression and SVM with $\ell_2^2$ regularization were obtained by \cite{curtain2019coresets} using uniform sampling. 
They considered empirical loss minimization problems of the form $f(\M{w})=\sum_{i=1}^n l(-y_i{\M{w^T}\M{x_i})}+ \lambda r(R{\M{w}})$ where $\M{x_i}$ is the $i^{th}$ data point with corresponding label $y_i \in \{+1,-1\}$ and ${\M{w}}$ is solution vector. The parameter $R$ is maximum $2$-norm of a row in data matrix.
Their coresets do not work for a general response vector as in regression problems.
In addition, for uniform sampling to work, they assume a $(\sigma,\tau)$ condition on the loss and the regularization functions $l$ and $r$ respectively which says that $l(-\sigma) > 0$, and if $\|\M{w}\|_2 \geq \sigma$
then $r(\M{w}) \geq \tau l(\|\M{w}\|_2) $. If we consider response vectors consisting of $\{-1,+1\}$, we can formulate the regression problem to fit their loss function template in the following manner--- first create augmented $\M{x'}= \begin{bmatrix} \mathbf{x}\\ y_i \end{bmatrix}$ and $\M{w'}= \begin{bmatrix} \mathbf{w}\\ -1 \end{bmatrix} $. The lasso problem, under the restriction that $y \in \{-1, 1\}^d$, can be represented in their framework using  $\M{x'}$ and $\M{w'}$, setting $l(s) =s^2$ and $r(\M{w})= \|\M{w}\|_1$. Here $l(-\sigma) > 0$ for all values of $\sigma$. However, there is no $\tau$ that satisfies $\|\M{w}\|_1 \geq \tau\|\M{w}\|_2^2$ for all $\M{w}$ with $\|\M{w}\|_2 \geq \sigma$. So their framework does not apply to our case.

In spirit, our work most closely relates to the work in \cite{avron2017sharper}. Here the authors show that for ridge regression, coresets of size smaller than the coreset for the unregularized least square regression problem can be constructed. Their coreset size is a function of the statistical dimension of the matrix for some regularization parameter $\lambda > 0$. It is important to note that a coreset for unregularized regression can be shown to work for regularized regression as well. However the work in \cite{avron2017sharper} supports the intuition that coresets for regularized regression may be smaller than coresets for its unregularized counterpart. We generalize this idea from ridge regression to $\ell_p$ regression with $\ell_p$ regularization in this paper. We also show that for a broad class of regularized regression problems, it is not possible to construct coreset smaller in size compared to its unregularized form. Another recent related work on ridge regression is \cite{kachamoptimal}. In this paper they construct a deterministic coreset for ridge regression by using the BSS\cite{batson2012twice} technique and improve the dependence on $\epsilon$.
They also provide lower bounds showing this is tight.
Our lower bound is for the case $r \neq s$ in regularized regression, and is not applicable for ridge regression.
Our coresets are randomized, and are for regularized $\ell_p$ regression for all values of p. The BSS technique only works for p=2. Constructing deterministic coresets for regularized regression for all p is an interesting open question.
\section{Notations and Preliminaries}
In this section we describe all the notations, definitions and existing results that we will use throughout this paper. 
A scalar or a function (specified by context) is denoted by a lower case letter, e.g. $x$ while a vector is denoted by a boldface lower case letter, e.g. $\M{x}$. 
A matrix or a dataset, as defined by the context, is denoted by a boldface  uppercase letter, e.g. $\M{X}$.
A non bold face upper case letter also represents some scalar value unless stated otherwise.
We represent a dataset with a matrix where the rows of the matrix represent data points and the columns represent the features.
Vectors are considered column vectors unless stated otherwise.
$\M{a}_i$ represents the $i^{th}$ row of matrix $\M{A}$ while $\M{a}^k$ represents its $k^{th}$ column and $a_{ij}$ represents the entry in its $i^{th}$ row and $j^{th}$ column.
$O$ represent big-O in the ordered notation. For two quantities $x$ and $b$ and a suitable $\epsilon > 0$ we will often write $x \in (1 \pm \epsilon) b$ to compactly represent that $ (1 - \epsilon)b \le x \le (1 + \epsilon) b$. All statements where we say ``high probability", hold with probability at least some large constant, e.g. $0.99$, unless otherwise stated.

For a vector $\M{x} \in \mathbb{R}^d$, $\|\M{x}\|_p$, $p \geq 1$ represents its $p^{th}$ norm which is defined as $\|\M{x}\|_p= ({\sum_{j=1}^d|x_j|^p})^\frac{1}{p}$. 
The infinity norm for a vector  is given as $\|\M{x}\|_\infty = \max_j |x_j| $. 
For a matrix $\M{A} \in \mathbb{R}^{n \times d}$, $\|\M{A}\|_p$ denotes its entry wise $p$ norm which is defined  as $\|\M{A}\|_p = (\sum_{i=1}^n\sum_{j=1}^d |a_{ij}|^p)^{\frac{1}{p}}$. 
For $p=2$, this is also called the Frobenius norm and is also represented as $\|\M{A}\|_F$. 
We differentiate the matrix entry wise $p$ norm from the matrix induced operator $p$ norm which is denoted by the symbol $\|\M{A}\|_{(p)}$ and is defined as $\|\M{A}\|_{(p)}= \sup_{\M{x} \neq \M{0}}\frac{\|\M{Ax}\|_p}{\|\M{x}\|_p}$. 
Specifically for $p=2$, $\|\M{A}\|_{(2)} = \sigma_{\text{max}}(\M{A})$ i.e. the highest singular value of matrix $\M{A}$ and for $p=1$, $\|\M{A}\|_{(1)}$ is the highest absolute column sum of the matrix $\M{A}$. 
The Singular Value Decomposition $($abbreviated as SVD$)$ of a full rank matrix $\M{A}$ is given as $\M{A}=\M{U\Sigma V^T}$ where $\M{U} \in \mathbb{R}^{n \times d}$ and $\M{V} \in  \mathbb{R}^{d \times d}$ represent $\M{A}$'s left and right singular vectors respectively. 
$\M{\Sigma} \in \mathbb{R}^{d \times d}$ is a diagonal matrix containing the singular values of $\M{A}$ in descending order.
\subsection{Coresets}
A coreset is a small summary of data which can give provable guarantees for a particular optimization problem. 
Formally, given a dataset
$\M{X} \in \mathbb{R}^{n \times d}$ where the rows represent the datapoints, set of queries $Q$ and a non-negative cost function $\mathnormal{f}_{\M{q}}(\M{x})$ with parameter $\M{q} \in {Q}$ and data point $\M{x} \in \M{X}$, a dataset consisting of subsampled and appropriately reweighted points $\M{C}$ is an $\epsilon$-coreset if $\forall \M{q} \in Q$, 
\begin{equation*}
  \bigg|\sum_{\M{x} \in \M{X}}f_{\M{q}}(\M{x}) - \sum_{\tilde{\M{x}} \in \M{C}}{f}_{\M{q}}(\mathbf{\tilde{x}})\bigg| \leq \epsilon\sum_{\M{x} \in \M{X}}{f}_{\M{q}}(\M{x})  
\end{equation*}
 for some $\epsilon > 0$. We will denote $\sum_{\M{x} \in \M{X}}f_{\M{q}}(\M{x}) = f_\M{q}(\M{X})$. Even if the function is defined over entire dataset $\M{X}$ and not on individual points we can define coreset for it in terms of $f_\M{q}(\M{X})$ directly. For e.g. with $p\neq r$, $\|\M{Ax}\|_p^r$ cannot be represented in terms of sum on individual $\M{a_{i}}$'s but we can still define coreset property for it.
 When the above equation is satisfied  $\forall \M{q} \in Q$, $\M{C}$ is  called a strong coreset. We will refer only to a strong coreset as coreset in this paper. The advantage of
 a strong coreset, specifically in machine learning problems, is that we can train the model on the coreset, i.e. a smaller data set, and then use the model obtained from it as a surrogate for the model trained on the original data, with comparable accuracy. Formally, via \cite{bachem2017practical}, let $\M{C}$ be an $\epsilon$-coreset of $\M{X}$  for $\epsilon \in (0,\frac{1}{3})$ for some function. If $\M{q^{*}_X}$ and $\M{q^{*}_C}$ denote the optimal (infimum) solutions for $\M{X}$ and $\M{C}$ respectively, then $f_{\M{q^{*}_C}}(\M{X}) \leq (1+3\epsilon)f_{\M{q^{*}_X}}(\M{X})$. 
Given a dataset $\M{X}$, the query space $Q$, as well as the cost function $f_{\M{q}}(\cdot)$, Langberg et al.~\cite{langberg2010universal} define a set of scores,
termed as sensitivities, that can be used to create coresets via importance sampling. 
The sensitivity of the  $i^{th}$ point is defined as 
$s_{i} = \sup_{\M{q} \in Q} \frac{f_{\M{q}}(\M{x_i})}{\sum_{\M{x'} \in \M{X}} {{f}_\M{q}(\M{x'})}}$.
Intuitively sensitivity of a point captures its worst case contribution to the value of an objective function. The work by Langberg et al. \cite{langberg2010universal} shows that using any upper bounds to the sensitivity scores, 
we can create a probability distribution, using which we can then sample a coreset. Throughout the paper we refer to the sensitivity of the $i^{th}$ data point for some objective function as $s_i$ and the function will be clear from the context. Feldman et al. \cite{feldman2011unified} provided a unified framework to build coresets using the sensitivity framework which was improved by Braverman et al.~\cite{braverman2016new}. Their theorem on coreset size using sensitivity is the following:
\begin{theorem}\cite{braverman2016new}\label{th:loglinear core}
Let $f_{\M{q} \in Q}(\cdot)$ be the function, $\M{X}$ be the dataset and query space $Q$ be of dimension $d$. Let the sum of sensitivities be $S$. Let $(\epsilon,\delta) \in (0,1)$. Let $r$ be such that
\begin{equation*}
r \geq \frac{10S}{\epsilon^2}\bigg(d\log{S} + \log{\frac{1}{\delta}}\bigg).
\end{equation*}
$\M{C}$ be a matrix of $r$ rows, each 
sampled i.i.d from $\M{X}$ such that for every $\M{x_i} \in \M{X}$ and $\tilde{\M{x}}_i \in \M{C}$, $\tilde{\M{x}}_i$ is a scaled version of $\M{x_i}$ with probability $\frac{s_i}{S}$ and scaled by $\frac{S}{rs_i}$, then $\M{C}$ is an $\epsilon$-coreset of $\M{X}$ for function $f$, with probability at least $1- \delta$.
\end{theorem}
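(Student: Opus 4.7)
The plan is to prove the theorem via an importance sampling analysis followed by a uniform convergence argument over all queries $\M{q} \in Q$. First I would fix a query $\M{q}$ and study the random estimator $\hat{f}_{\M{q}}(\M{C}) = \frac{1}{r}\sum_{j=1}^{r} \frac{f_{\M{q}}(\M{x}_{i_j})}{p_{i_j}}$, where $p_i = s_i/S$ and $i_j$ is the index of the $j$-th sampled row. Since each sampled row is scaled by $S/(r s_{i_j})$, the quantity $\sum_{\tilde{\M{x}}\in\M{C}} f_{\M{q}}(\tilde{\M{x}})$ equals $\hat{f}_{\M{q}}(\M{C})$, and by construction $\mathbb{E}[\hat{f}_{\M{q}}(\M{C})] = f_{\M{q}}(\M{X})$.

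The crucial property supplied by the sensitivity definition is that the summands in $\hat{f}_{\M{q}}(\M{C})$ are bounded: since $f_{\M{q}}(\M{x}_i) \leq s_i f_{\M{q}}(\M{X})$, each term $f_{\M{q}}(\M{x}_{i_j})/p_{i_j} = S\, f_{\M{q}}(\M{x}_{i_j})/s_{i_j}$ is at most $S\, f_{\M{q}}(\M{X})$. This deterministic bound, together with a variance bound of the same order, lets Bernstein's inequality yield, for a single $\M{q}$, an estimate of the form $\Pr\bigl[\,|\hat{f}_{\M{q}}(\M{C}) - f_{\M{q}}(\M{X})| > \epsilon\, f_{\M{q}}(\M{X})\,\bigr] \leq 2\exp(-\Omega(r\epsilon^{2}/S))$. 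Plugging in the stated $r$ makes this tail much smaller than $\delta$ for any individual query.

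To upgrade from pointwise to uniform approximation over all $\M{q} \in Q$, I would use the standard pseudo-dimension / $\gamma$-net machinery tailored to importance sampling. The class of normalized functions $\{f_{\M{q}}(\cdot)/f_{\M{q}}(\M{X}) : \M{q} \in Q\}$ has pseudo-dimension at most $d$ (this is essentially what ``query space of dimension $d$'' means here, via the induced range space $\{(f_{\M{q}}(\M{x}_i))_{i=1}^{n} : \M{q} \in Q\}$), and so a Haussler-type discretization of $Q$ at scale roughly $\epsilon/S$ has cardinality bounded by $(S/\epsilon)^{O(d)}$. Applying the pointwise Bernstein bound to each net point and taking a union bound forces the exponent to absorb an extra $d\log S$ term, producing exactly the stated sample size $r = \Omega(\epsilon^{-2} S (d\log S + \log(1/\delta)))$. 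A routine Lipschitz-style transfer from the net to all of $Q$ then completes the argument.

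The main obstacle is the uniform-convergence step: both defining ``dimension $d$'' appropriately and carrying out the net-based union bound require care, because $f_{\M{q}}(\M{X})$ can be arbitrarily small relative to individual $f_{\M{q}}(\M{x}_i)$. The key trick, as in the Braverman et al.\ argument, is to work in a \emph{relative} scale by dividing through by $f_{\M{q}}(\M{X})$ and to use the sensitivity bound $f_{\M{q}}(\M{x}_i)/f_{\M{q}}(\M{X}) \leq s_i$ to control the range of the normalized summands; pointwise concentration then transfers to the entire query space with only the $d\log S$ overhead, which is precisely the factor appearing in the theorem.
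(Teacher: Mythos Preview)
The paper does not actually prove this theorem; it is quoted verbatim from \cite{braverman2016new} and stated without proof. The only proof the paper supplies is for the closely related Theorem~\ref{th: linear core}, which replaces the $d\log S$ term by $d\log(1/\epsilon)$.

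Your proposal is correct and tracks the standard Braverman et al.\ argument. In particular, your pointwise step (Bernstein on the importance-weighted, sensitivity-normalized summands with deterministic upper bound $S f_{\M q}(\M X)$) is exactly the same computation the paper carries out in its proof of Theorem~\ref{th: linear core}. The only place you and the paper differ is the uniform-convergence step: you discretize at relative scale $\epsilon/S$ using the pseudo-dimension of the normalized function class, obtaining a net of size $(S/\epsilon)^{O(d)}$ and hence the $d\log S$ overhead in the stated bound; the paper's proof of Theorem~\ref{th: linear core} instead covers the query space $Q\subset[0,1]^d$ directly at $L_1$ scale $\epsilon/2$ via Haussler's packing bound, giving a net of size $O(\epsilon^{-d})$ and hence the $d\log(1/\epsilon)$ overhead. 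So your route is the appropriate one for Theorem~\ref{th:loglinear core} as stated, and the paper's own (different) net choice is precisely what yields its improved Theorem~\ref{th: linear core}.
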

We present a slightly better version (in terms of dependence on $S$) of this theorem obtained by using a tighter tail inequality and use it for our coreset size. \begin{theorem}\label{th: linear core}
For the same setup as defined in Theorem \ref{th:loglinear core}, if $r = O{\big(\frac{S}{\epsilon^2}(d\log{\frac{1}{\epsilon}} + \log{\frac{1}{\delta}})\big)}$,  $\M{C}$ is an $\epsilon$-coreset of $\M{X}$ for function $f$, with probability at least $1- \delta$.
\end{theorem}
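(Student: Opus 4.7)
The plan is to follow the sensitivity sampling framework behind Theorem \ref{th:loglinear core} and sharpen the concentration step for a fixed query, then combine with a covering of the query space $Q$ to obtain uniform convergence. Tightening the tail inequality is what allows the $d\log S$ factor to be replaced by $d\log(1/\epsilon)$.

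For a fixed query $\M{q} \in Q$, set $Y_j = f_{\M{q}}(\tilde{\M{x}}_j)/(r\, p_{i_j}\, f_{\M{q}}(\M{X}))$ where $p_i = s_i/S$; then $\sum_j Y_j$ is an unbiased estimator of $1$, and the target event is exactly $|\sum_j Y_j - 1| > \epsilon$. The sensitivity definition $f_{\M{q}}(\M{x}_i)/f_{\M{q}}(\M{X}) \le s_i = S p_i$ yields the two bounds $|Y_j| \le S/r$ and $\sum_j \mathrm{Var}(Y_j) \le S/r$. Feeding these into a Bernstein-type tail bound (as opposed to the cruder Hoeffding/Chernoff form used behind Theorem \ref{th:loglinear core}) gives a failure probability of the form $2\exp(-\Omega(r\epsilon^2/S))$, so that $r = \Omega((S/\epsilon^2)\log(1/\delta))$ samples suffice for a single query.

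To upgrade to uniform convergence over all $\M{q} \in Q$, normalize the functions $f_{\M{q}}/f_{\M{q}}(\M{X})$ to $[0,1]$ using the sensitivity bound and invoke a standard metric-entropy estimate (Pollard/Haussler) for function classes of pseudo-dimension $d$, yielding an $\epsilon$-net $N_\epsilon \subset Q$ of size $(C/\epsilon)^{O(d)}$. Replacing $\delta$ by $\delta/|N_\epsilon|$ in the Bernstein bound and union bounding converts $\log(1/\delta)$ into $d\log(1/\epsilon) + \log(1/\delta)$, and for any $\M{q} \in Q$ the estimate at its nearest net point $\M{q}'$ transfers to $\M{q}$ up to the net granularity, which is absorbed into the final $\epsilon$ by rescaling by a constant factor.

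The main obstacle is the covering step: getting a net whose size depends only on $\epsilon$ (and $d$) and not on $S$ or $n$. A naive Sauer--Shelah bound on sample labelings would produce a factor like $(Cn)^d$, reintroducing a $\log n$ (and hence $\log S$) term of the same sort appearing in Theorem \ref{th:loglinear core}. Sidestepping this requires working directly in the normalized function space $\{f_{\M{q}}/f_{\M{q}}(\M{X}) : \M{q} \in Q\}$ and using a genuinely dimensional metric-entropy bound (e.g., via chaining or direct packing). Once the right covering is in hand, combining with the Bernstein step yields the claimed sample size $r = O\bigl((S/\epsilon^2)(d\log(1/\epsilon) + \log(1/\delta))\bigr)$.
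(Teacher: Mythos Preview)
Your proposal is correct and follows essentially the same route as the paper: Bernstein's inequality for a fixed query (using the sensitivity bound to control both the range and the variance of the summands), followed by an $\epsilon$-net over the normalized function class of size $(C/\epsilon)^{O(d)}$ via Haussler's covering bound, and a union bound. The paper's proof is the same two-step argument with only cosmetic differences in normalization (it works with $g_{\M q}$ having mean $1/S$ rather than your $Y_j$ summing to mean $1$), and its covering step is in fact stated more loosely than yours.
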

\begin{proof}
First we bound the sample size for a fixed query $\M{q} \in Q$. Let $s_i$ be the sensitivity of the $i^{th}$ point $\M{x_i}$ and $S$ be the sum of the sensitivities.
Let the sampling probability be $p_i =\frac{s_i}{S}$. 

For all $\M{q} \in Q$ and $\M{x_i} \in \M{X}$ define a function  
$g_{\M{q}}(\M{x_i}) =\frac{f_{\M{q}}(\M{x_i})}{Sp_i\sum_{j=1}^n{f_{\M{q}}(\M{x_j})}}$.                           
So, \[\mathbb{E}[g_{\M{q}}(\M{x_i})]=\frac{1}{S}\] for $\M{x_i}$ drawn uniformly at random from $\M{X}$ and 
\[ \frac{1}{r} \sum_{\substack{i \in [n] s.t. \\ \M{\tilde{x}_i}\in \M{C}}} g_{\M{q}}(\M{x_i}) =\frac{\sum_{\M{\tilde{x}_i} \in \M{C}}{f_{\M{q}}(\M{\tilde{x}_i})}} {S\sum_{\M{x_i} \in \M{X}}{f_{\M{q}}(\M{x_i})}} \]

Let, \[T = \sum_{\substack{i \in [n] s.t. \\ \M{\tilde{x}_i}\in \M{C}}} g_{\M{q}}(\M{x_i})\] then \[\mathbb{E}[T] = \sum_{\substack{i \in [n] s.t. \\ \M{\tilde{x}_i}\in \M{C}}} \mathbb{E}[g_{\M{q}}(\M{x_i})] = r/S\]

\begin{eqnarray*}
 \mbox{var}(g_{\M{q}}(\M{x_i})) &\leq& \mathbb{E}[(g_{\M{q}}(\M{x_i}))^{2}] \nonumber \\
 &=& \sum_{\M{x_i} \in \M{X}} \frac{(f_{\M{q}}(\M{x_i}))^{2}}{(\sum_{j=1}^{n}{f_{\M{q}}(\M{x_j}))^{2}}S^{2}p_i} \nonumber \\
 &\stackrel{i}{\leq} & \sum_{\M{x_i} \in \M{X}} \frac{(f_{\M{q}}(\M{x_i}))^{2}\sum_{j=1}^{n}{f_{\M{q}}(\M{x_j})}}{(\sum_{j=1}^{n}{f_{\M{q}}(\M{x_j}))^{2}}f_{\M{q}}(\M{x_i})S} \nonumber \\
 &=& 1/S
\end{eqnarray*}
We get $(i)$ by replacing values of $p_i$ and $s_i$.

Now $\mbox{var}(g_{\M{q}}(\M{x_i})) \leq \mathbb{E}[(g_{\M{q}}(\M{x_i}))^{2}] \leq 1/S$. So $\mbox{var}(T)\leq r/S$. 

Now applying Bernstein Inequality as given in \cite{dubhashi2009concentration} we get,
\begin{eqnarray*}
 \mbox{Pr}(| T-\mathbb{E}[T] | \geq r\epsilon') &\leq& \exp(-\frac{r^{2}\epsilon'^{2}}{r/S+r\epsilon'/3}) \nonumber \\
 \mbox{Pr}\bigg(\Big\vert\frac{\sum_{\M{\tilde{x}_i} \in \M{C}}{f_{\M{q}}(\M{\tilde{x}_i})}}{S\sum_{\M{x_i} \in \M{X}}{f_{\M{q}}(\M{x_i})}} - \frac{1}{S}\Big\vert \geq {\epsilon}'\bigg) &\leq& \exp\bigg(-\frac{r\epsilon'^{2}}{(1/S)+({\epsilon'}/3)}\bigg)
\end{eqnarray*}

Replacing $\epsilon'$ with $\epsilon/S $ we get,
$$\mbox{Pr}\bigg(\Big\vert\sum_{\M{\tilde{x}_i} \in \M{C}}{f_{\M{q}}(\M{\tilde{x}_i})}-\sum_{\M{x_i} \in \M{X}}{f_{\M{q}}(\M{x_i})} \Big\vert \geq \epsilon\sum_{\M{x_i} \in \M{X}}{f_{\M{q}}(\M{x_i})}\bigg) \leq 2\exp\bigg(\frac{-2r\epsilon^{2}}{S(1+\frac\epsilon 3)}\bigg)$$
To make the above probability less than $\delta$ we choose $r \geq \frac{S}{2\epsilon^2}(1+\frac\epsilon 3)\log\frac{2}{\delta}$ which depends on $S$ for a fixed query $\M{q} \in Q$.
Now to bound the number of samples required to give a uniform bound for all queries simultaneously $\forall {\M{q}} \in Q$, we use the same $\epsilon$-net argument as described in \cite{bachem2017practical}. This part is essentially a repeat of their argument. However we present it here for completeness. Observe that function $g_{\M{q}}(\M{x_i})$ lies in the interval $[0,1]$. Due to the bounded dimension $d$ of $Q$, the queries in $Q$ span a subspace $[0,1]^d$. There may be infinite number of queries in $Q$. However these may be covered up to $L_1$ distance $\epsilon/2$ by some set $Q^* \subset Q$ of $O(\epsilon^{-d})$ points \cite{haussler1995sphere} as given in \cite{bachem2017practical}. For the $\epsilon$-net argument let $\mathcal{E}$ be the bad event that the coreset property is not satisfied by some $\M{C}$. Therefore

\begin{eqnarray*}
 \mbox{Pr}(\mathcal{E})&=&\mbox{Pr}\bigg[\exists\M{q} \in Q: \Big\vert\sum_{\M{\tilde{x}_i} \in \M{C}}{f_{\M{q}}(\M{\tilde{x}_i})}-\sum_{\M{x_i} \in \M{X}}{f_{\M{q}}(\M{x_i})} \Big\vert > \epsilon\sum_{\M{x_i} \in \M{X}}{f_{\M{q}}(\M{x_i})}\bigg]\\
 &\leq&\mbox{Pr}\bigg[\exists\M{q} \in Q^*: \Big\vert\sum_{\M{\tilde{x}_i} \in \M{C}}{f_{\M{q}}(\M{\tilde{x}_i})}-\sum_{\M{x_i} \in \M{X}}{f_{\M{q}}(\M{x_i})} \Big\vert > \frac{\epsilon}{2}\sum_{\M{x_i} \in \M{X}}{f_{\M{q}}(\M{x_i})}\bigg]\\
 &\leq& 2|Q^*|\exp{\bigg(\frac{-2r\epsilon^{2}}{S(1+\frac\epsilon 3)}\bigg)}
\end{eqnarray*}
To make $\M{C}$ an $\epsilon$-coreset  with probabiltiy atleast 1-$\delta$, we choose
$r = O(\frac{S}{\epsilon^2}(\log{|Q^*|}+\log{\frac{2}{\delta}})$. Now as $|Q^*| \in O(\epsilon^{-d})$ we have $r = O{\big(\frac{S}{\epsilon^2}(d\log{\frac{1}{\epsilon}} + \log{\frac{1}{\delta}})\big)}$.  
\end{proof}
\subsection{$\ell_p$ Regression and Regularization}
The $\ell_p$ regression problem is defined as follows: given $\M{A}$ and $\M{b}$, find $\min _{\mathbf{x} \in \mathbb{R}^d} \|\mathbf{Ax} - \mathbf{b}\|_p^p$. 
For $p=1,2$ the problems are referred to as Least Absolute Deviation and Least Squares Regression  respectively.
We call a matrix $\M{\Pi}$ to have  {\em subspace preserving} property if, $\forall {\M{x}} \in \mathbb{R}^d$, $| \|\M{\Pi Ax}\|_p - \|\M{Ax}\|_p | \leq \epsilon \|\M{Ax}\|_p$.

Notice that if a sampling and reweighing matrix $\M{\Pi}$ satisfies the $\ell_p$ subspace preservation property, it will also provide a coreset for the $\ell_p$ regression problem.
If we create matrix $\M{A'}$ by concatenating $\M{A}$ and $\M{b}$ as $\M{A'}=\M{[A ~~ b]}$ and consider $\M{x'}=\begin{bmatrix} \mathbf{x}\\ -1 \end{bmatrix}$, then a subspace preserving property of $\M{\Pi}$ in $\mathbb{R}^{d+1}$ implies that  $\M{\Pi A}$ is a coreset for $\ell_p$ regression also.
For $p=2$, an  $O(\frac{d \log{d}}{\epsilon^2})$ sized coreset can be obtained by sampling  using the popular leverage scores which are the squared Euclidean row-norms of any orthogonal column basis of $\M{A}$~\cite{drineas2006sampling}.
A matrix $\M{U}$ is called an $(\alpha,\beta,p)$ well-conditioned basis for $\M{A}$ if $\|\M{U}\|_p \leq \alpha$ and $\forall \M{x} \in \mathbb{R}^d, \|\M{x}\|_q  \leq \beta \|\M{U}\M{x}\|_p $ where $\frac{1}{p}+\frac{1}{q}=1$. 
In fact, the $\M{U}$ obtained using the SVD of $\M{A}$ is a $(\sqrt{d},1,2)$ well-conditioned basis of $\M{A}$. 
For other values of $p>1$ \cite{dasgupta2009sampling} showed that by sampling using the $p^{th}$ power of the $p$ norm of rows of the $(\alpha,\beta,p)$ well-conditioned basis of $\M{A}$, we can obtain a coreset of size $\tilde{O}(\alpha\beta)^p$ with high probability for $\ell_p$ subspace preservation and hence $\ell_p$ regression. Well-conditioned basis for $p$ norm can be constructed in various ways like using Cauchy random variables~\cite{sohler2011subspace} or exponential random variables \cite{woodruff2013subspace}. Yang et al. provide a good review of methods \cite{yang2015implementing}. Our analysis works with any method of constructing the well-conditioned basis.
For the norm based regularized regression we consider the following general form for $\lambda > 0$
\begin{equation*}
    \min _{\M{x} \in \mathbb{R}^d} \|\M{A}\M{x} - \M{b}\|_p^r	+ \lambda \|\M{x}\|_q^s
\end{equation*}
for $p,q \geq 1$ and $r,s>0$. Notice that not all regularized regression forms falling under above category can be expressed as sum of individual functions. A  coreset for this problem is $(\tilde{\M{A}},\tilde{\M{b}})$ such that $\forall \M{x} \in \mathbb{R}^d$ and $\forall \lambda >0 $, 
	\begin{equation*}
	\|\M{\tilde{A}}{\M{x}} - \M{\tilde{b}}\|_p^r	+ \lambda \|{\M{x}}\|_q^s \in (1\pm\epsilon) (\|\M{A}\M{x} - \M{b}\|_p^r	+ \lambda \|\M{x}\|_q^s)
	\end{equation*} 
It is not difficult to verify that a coreset for the unregularized version of regression  is also a coreset for the regularized one. However, to reiterate, our goal is to analyze whether the size of coreset for the above form of regularized regression can be shown to be provably smaller than the size of the coreset for the unregularized counterpart. We answer this question in the following sections. We first show a negative result where the regularization does not result in a smaller coreset. Next, we show settings where the size of the coreset does decrease inversely with $\lambda$. 
Analogous to the subspace embedding, given a matrix $\M{A} \in \mathbb{R}^{n \times d}$,  a matrix $\M{A_c}$ with $k$ rows is a coreset for $(\|\mathbf{Ax}\|_p^r + \lambda \|\M{x}\|_q^s) $ if $\forall{\M{x}} \in \mathbb{R}^d$ and $\forall \lambda > 0$, the following holds.
\begin{equation}
\begin{split}
(1-\epsilon)(\|\mathbf{Ax}\|_p^r + \lambda \|\M{x}\|_q^s) \leq \|\mathbf{A_cx}\|_p^r + \lambda \|\M{x}\|_q^s \leq  (1+\epsilon)(\|\mathbf{Ax}\|_p^r + \lambda \|\M{x}\|_q^s)
\label{eq:regcore}
\end{split}
\end{equation}
It is easy to see that such a coreset will also give a coreset for the $\ell_p$ regression with $\ell_p$ regularization by using the concatenated matrix $\M{A'}$ and vector $\M{x'}$.
\section{Coresets For General Form of Regularized Regression }
For the case of ridge regression $(p,q,r,s=2)$, Avron et al. \cite{avron2017sharper} showed smaller coresets with size dependent on the statistical dimension of the matrix $\M{A}$. They constructed their coreset by sampling according to the ridge leverage scores. In this section we show that it is not always  possible to get a coreset for a regularized version of regression problem which is strictly smaller in size than the coreset for its unregularized counterpart. Note that, for the purposes of the following theorem,
a coreset of $\M{A}$ is any matrix $\M{A_c}$ that satisfies the coreset condition; in particular,  $\M{A_c}$ does not need to be a {\em sampling based coreset} of $\M{A}$. 

\begin{theorem}\label{th:ss preserve}Given a matrix $\M{A} \in \mathbb{R}^{n \times d}$ and  $\lambda >0$, any coreset for the problem $\|\mathbf{Ax}\|_p^r + \lambda \|\M{x}\|_q^s $, where $r \neq s$, $p,q \geq 1$ and $ r,s > 0 $, is also a coreset for  $\|\mathbf{Ax}\|_p^r $.
\end{theorem}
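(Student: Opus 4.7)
The plan is to exploit the differing homogeneity of the data-fitting term $\|\mathbf{Ax}\|_p^r$ and the regularizer $\lambda\|\mathbf{x}\|_q^s$ under a rescaling of $\mathbf{x}$. Both $\|\mathbf{A}\cdot\|_p$ and $\|\cdot\|_q$ are $1$-homogeneous, so for any scalar $c > 0$ we have $\|\mathbf{A}(c\mathbf{x})\|_p^r = c^r \|\mathbf{Ax}\|_p^r$ and $\|c\mathbf{x}\|_q^s = c^s \|\mathbf{x}\|_q^s$. Since $r \neq s$, these two powers of $c$ differ, and we can use this asymmetry to drive the regularizer contribution to zero relative to the data term, thereby isolating the coreset inequality for $\|\mathbf{Ax}\|_p^r$ alone.

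Concretely, I would start from the assumed coreset inequality \eqref{eq:regcore} applied at the point $c\mathbf{x}$ in place of $\mathbf{x}$. This yields
\begin{equation*}
(1-\epsilon)\bigl(c^r\|\mathbf{Ax}\|_p^r + \lambda c^s\|\mathbf{x}\|_q^s\bigr) \leq c^r\|\mathbf{A_c}\mathbf{x}\|_p^r + \lambda c^s\|\mathbf{x}\|_q^s \leq (1+\epsilon)\bigl(c^r\|\mathbf{Ax}\|_p^r + \lambda c^s\|\mathbf{x}\|_q^s\bigr).
\end{equation*}
Dividing through by $c^r$ rewrites this as
\begin{equation*}
(1-\epsilon)\bigl(\|\mathbf{Ax}\|_p^r + \lambda c^{s-r}\|\mathbf{x}\|_q^s\bigr) \leq \|\mathbf{A_c}\mathbf{x}\|_p^r + \lambda c^{s-r}\|\mathbf{x}\|_q^s \leq (1+\epsilon)\bigl(\|\mathbf{Ax}\|_p^r + \lambda c^{s-r}\|\mathbf{x}\|_q^s\bigr).
\end{equation*}

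Now I would choose $c$ so that $c^{s-r} \to 0$: take $c \to 0^+$ if $s > r$, and $c \to \infty$ if $s < r$. In either case the regularizer contributions on both sides vanish in the limit, and since the inequality holds for every $c > 0$, passing to the limit gives
\begin{equation*}
(1-\epsilon)\|\mathbf{Ax}\|_p^r \leq \|\mathbf{A_c}\mathbf{x}\|_p^r \leq (1+\epsilon)\|\mathbf{Ax}\|_p^r,
\end{equation*}
which is exactly the coreset property for the unregularized objective $\|\mathbf{Ax}\|_p^r$. As $\mathbf{x}$ was arbitrary, the claim follows. The argument needs no hypothesis beyond $r\neq s$ and $\lambda,c > 0$, and never uses that $\mathbf{A_c}$ comes from sampling, matching the theorem's assertion.

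The only subtle point, and the main thing to be careful about, is the limit step: one should note that $\|\mathbf{Ax}\|_p^r$ and $\|\mathbf{A_c}\mathbf{x}\|_p^r$ are finite constants (for a fixed $\mathbf{x}$), so the inequalities are linear in the term $\lambda c^{s-r}\|\mathbf{x}\|_q^s$ and its vanishing yields the stated limit cleanly. No continuity or compactness issues arise, so no further technical machinery is required.
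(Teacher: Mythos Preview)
Your proof is correct and rests on the same key observation as the paper's: the data term and the regularizer are homogeneous of different degrees $r\neq s$, so a rescaling $\mathbf{x}\mapsto \alpha\mathbf{x}$ lets one decouple them. The packaging, however, differs. The paper argues by contradiction: assuming some $\mathbf{x}$ violates the unregularized coreset bound with gap $\epsilon'>\epsilon$, it exhibits an explicit scaling $\alpha$ (with the threshold $\alpha^{|r-s|}$ compared against $\frac{\epsilon'+\epsilon}{\epsilon'-\epsilon}\cdot\frac{\lambda\|\mathbf{x}\|_q^s}{\|\mathbf{Ax}\|_p^r}$) for which the \emph{regularized} ratio also falls outside $[1-\epsilon,1+\epsilon]$, contradicting the coreset hypothesis. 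You instead proceed directly: apply the regularized inequality at $c\mathbf{x}$, divide by $c^r$, and send $c^{s-r}\to 0$. Your route is shorter and avoids the two-case, two-subcase analysis; the paper's route is constructive in that it names a single finite $\alpha$ rather than passing to a limit. Both are valid, and neither uses anything about how $\mathbf{A_c}$ was produced.
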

\begin{proof}
	The proof is by contradiction. Let $\M{A_c}$ be a coreset for $\|\mathbf{Ax}\|_p^r + \lambda \|\M{x}\|_q^s $, where $r \neq s$. Therefore, by definition of coreset, $\forall \M{x} \in \mathbb{R}^d$,
	$$ \|\mathbf{A_cx}\|_p^r + \lambda \|\M{x}\|_q^s \in (1 \pm \epsilon)(\|\mathbf{Ax}\|_p^r + \lambda \|\M{x}\|_q^s)$$
	Suppose $\M{A_c}$ is not a coreset for $\|\mathbf{Ax}\|_p^r $. We consider the two cases:\\
	\textbf{Case 1}:  $\exists  \M{x} \in \mathbb{R}^{d}$ s.t. $\|\mathbf{A_cx}\|_p^r > (1+\epsilon) \|\mathbf{Ax}\|_p^r$. Define $\epsilon'$ to be such that $\|\mathbf{A_cx}\|_p^r = (1+\epsilon')\|\mathbf{Ax}\|_p^r$. Clearly, $\epsilon' > \epsilon$. Let us define $\M{y}=\alpha\M{x}$ for some suitable $\alpha$. Consider the ratio
	\begin{eqnarray*}
	\frac{\|\mathbf{A_cy}\|_p^r + \lambda \|\M{y}\|_q^s}{\|\mathbf{Ay}\|_p^r + \lambda \|\M{y}\|_q^s}
	&=&\frac{\alpha^r \|\mathbf{A_c x}\|_p^r + \lambda\alpha^s \|\M{x}\|_q^s}{\alpha^r \|\mathbf{Ax}\|_p^r + \lambda\alpha^s \|\M{x}\|_q^s}\\
	&=&\frac{\alpha^r(1+\epsilon')\|\mathbf{Ax}\|_p^r + \lambda\alpha^s \|\M{x}\|_q^s}{\alpha^r \|\mathbf{Ax}\|_p^r + \lambda\alpha^s \|\M{x}\|_q^s}
	\end{eqnarray*}
	 Suppose $ r > s$. Then the ratio $ 
	      \frac{\alpha^r(1+\epsilon')\|\mathbf{Ax}\|_p^r + \lambda\alpha^s \|\M{x}\|_q^s}{\alpha^r \|\mathbf{Ax}\|_p^r + \lambda\alpha^s \|\M{x}\|_q^s}   = 
	    \frac{\alpha^{r-s}(1+\epsilon')\|\mathbf{Ax}\|_p^r + \lambda \|\M{x}\|_q^s}{\alpha^{r-s} \|\mathbf{Ax}\|_p^r + \lambda \|\M{x}\|_q^s} $. Here we want the ratio to be greater than $(1+ \epsilon)$.
	  By choosing $\alpha$ appropriately,  we can set the above ratio to be greater than $(1+(\frac{\epsilon + \epsilon'}{2})) > 1 + \epsilon$, since $\epsilon' > \epsilon$. A sufficient condition to choose $\alpha$ for this to happen is: $\alpha^{(r-s)} > \frac{(\epsilon' + \epsilon)}{(\epsilon'-\epsilon)} \frac{\lambda\|\M{x}\|_q^s}{\|\M{Ax}\|_p^r} $.
	  
	 Similarly if $r < s$, the ratio $
	     \frac{\alpha^r(1+\epsilon')\|\mathbf{Ax}\|_p^r + \lambda\alpha^s \|\M{x}\|_q^s}{\alpha^r \|\mathbf{Ax}\|_p^r + \lambda\alpha^s \|\M{x}\|_q^s}  =  \frac{(1+\epsilon')\|\mathbf{Ax}\|_p^r + \alpha^{s-r} \lambda \|\M{x}\|_q^s}{ \|\mathbf{Ax}\|_p^r + \alpha^{s-r}\lambda \|\M{x}\|_q^s}$. Here for the ratio to be greater than $(1+\epsilon)$, we can set $\alpha$ such that $\alpha^{(s-r)} < \frac{(\epsilon'-\epsilon)}{(\epsilon' + \epsilon)} \frac{\|\M{Ax}\|_p^r}{\lambda\|\M{x}\|_q^s}$.
	     
	 \textbf{Case 2}: $\exists  \M{x} \in \mathbb{R}^{d}$ s.t. $\|\mathbf{A_cx}\|_p^r < (1-\epsilon) \|\mathbf{Ax}\|_p^r$ and $\|\mathbf{A_cx}\|_p^r = (1-\epsilon')\|\mathbf{Ax}\|_p^r$ for some $\epsilon' > \epsilon$.
    Again define $\M{y}=\alpha\M{x}$ for some suitable $\alpha$. Consider the ratio
	\begin{eqnarray*}
	\frac{\|\mathbf{A_cy}\|_p^r + \lambda \|\M{y}\|_q^s}{\|\mathbf{Ay}\|_p^r + \lambda \|\M{y}\|_q^s} &=& \frac{\alpha^r \|\mathbf{A_cx}\|_p^r + \lambda\alpha^s \|\M{x}\|_q^s}{\alpha^r \|\mathbf{Ax}\|_p^r + \lambda\alpha^s \|\M{x}\|_q^s}\\
	&=&\frac{\alpha^r(1-\epsilon')\|\mathbf{Ax}\|_p^r + \lambda\alpha^s \|\M{x}\|_q^s}{\alpha^r \|\mathbf{Ax}\|_p^r + \lambda\alpha^s \|\M{x}\|_q^s}
	\end{eqnarray*}
	Suppose $r > s$. Here the ratio $\frac{\alpha^r(1-\epsilon')\|\mathbf{Ax}\|_p^r + \lambda\alpha^s \|\M{x}\|_q^s}{\alpha^r \|\mathbf{Ax}\|_p^r + \lambda\alpha^s \|\M{x}\|_q^s}  = \frac{\alpha^{r-s}(1-\epsilon')\|\mathbf{Ax}\|_p^r + \lambda \|\M{x}\|_q^s}{\alpha^{r-s} \|\mathbf{Ax}\|_p^r + \lambda \|\M{x}\|_q^s}$. We want the ratio to be smaller than $(1-\epsilon)$. Without loss of generality, we can set the ratio to be smaller than $(1-(\frac{\epsilon + \epsilon'}{2}))$ since $\epsilon' > \epsilon$. For this can set $\alpha$ s.t. $\alpha^{(r-s)} > \frac{(\epsilon' + \epsilon)}{(\epsilon'-\epsilon)}\frac{\lambda\|\M{x}\|_q^s}{\|\M{Ax}\|_p^r} $.\\
	 Similarly if $ r < s $, the ratio $\frac{\alpha^r(1-\epsilon')\|\mathbf{Ax}\|_p^r + \lambda\alpha^s \|\M{x}\|_q^s}{\alpha^r \|\mathbf{Ax}\|_p^r + \lambda\alpha^s \|\M{x}\|_q^s}  = \frac{(1-\epsilon')\|\mathbf{Ax}\|_p^r + \alpha^{s-r} \lambda \|\M{x}\|_q^s}{ \|\mathbf{Ax}\|_p^r + \alpha^{s-r}\lambda \|\M{x}\|_q^s}$. Here for the ratio to be smaller than $(1-\epsilon)$, we can set $\alpha$ such that $\alpha^{(s-r)} < \frac{(\epsilon' - \epsilon)}{(\epsilon'+\epsilon)}\frac{\|\M{Ax}\|_p^r}{\lambda\|\M{x}\|_q^s}$. Hence in both the cases, for both scenarios of $r$ and $s$ we can set an $\alpha$ which gives us a contradiction to the fact that $\M{A_c}$ is a coreset for the regularized function. Hence our assumption is wrong and $\M{A_c}$ is also a coreset for the unregularized function.
\end{proof}
This theorem implies the following corollary which gives our impossibility result.
\begin{corollary}\label{cor: impos res}
	Given a matrix $\M{A} \in \mathbb{R}^{n \times d} $ and a corresponding vector $\M{b} \in \mathbb{R}^{n}$, then for the function $\|\M{A}\M{x} - \M{b}\|_p^r	+ \lambda \|\M{x}\|_q^s$, $ r \neq s$, we cannot get a coreset of size smaller than the size of the optimal sized coreset for $\|\M{A}\M{x} - \M{b}\|_p^r.$
	\begin{proof}
	Consider the case $\M{b} = 0$, which reduces the above function to  $\|\M{A}\M{x}\|_p^r	+ \lambda \|\M{x}\|_q^s$. Theorem \ref{th:ss preserve} then implies that no smaller coreset is possible for the regularized problem. 
	This proof can be generalized to the setting when $\M{b}$ is in the column-space of $\M{A}$ in the following manner.
	Suppose $\M{b} = \M{A}\M{u}$. Also suppose $\M{A_c}$ and $\M{b_c}$ can be obtained as $\M{A_c} = \M{SA}$ and $\M{b_c}=\M{Sb}$ where $\M{S}$ can be either a sampling and reweighing or a scaling matrix.
	Now we want to prove the following : If $\M{S}$ is a coreset creation matrix for $(\M{A},\M{b})$ for regression i.e. $\forall{\M{x}}, \|\M{A_cx}-\M{b_c}\|_p^r \in (1 \pm \epsilon) \|\M{Ax}-\M{b}\|_p^r$, then it must be that
	$\forall{\M{x}}, \|\M{A_cx}\|_p^r \in (1 \pm \epsilon) \|\M{Ax}\|_p^r$.
	Proving this statement and using Theorem \ref{th:ss preserve} essentially proves the corollary for the more general setting of $\M{b}$ in column space of $\M{A}$. 
	To prove the statement we use contradiction. Let us suppose that the statement is false. Then $\exists \M{v} \in \mathbb{R}^d$ s.t. $\|\M{A_cv}\|_p^r > (1 + \epsilon) \|\M{Av}\|_p^r$. We will create a $\M{y}$ s.t that $\|\M{A_cy}-\M{b_c}\|_p^r > (1+\epsilon) \|\M{Ay}-\M{b}\|_p^r$. Consider the ratio
	\begin{equation*}
	    \frac{\|\M{S}(\M{Ay}-\M{b})\|_p^r}{\|\M{Ay}-\M{b}\|_p^r} = \frac{\|\M{S}\M{A}(\M{y}-\M{u})\|_p^r}{\|\M{A}(\M{y}-\M{u})\|_p^r}
	\end{equation*}
	Now if we choose $\M{y}= \M{u} + \M{v}$ then we have $\frac{\|\M{SAv}\|_p^r}{\|\M{Av}\|_p^r} > (1 + \epsilon) $. This a contradiction to the fact that $\M{SA},\M{Sb}$ is coreset to $\|\M{Ay}-\M{b}\|_p^r$. Hence our assumption is false. So $\forall{\M{x}}, \|\M{A_cx}\|_p^r \leq (1+\epsilon) \|\M{Ax}\|_p^r$. The other direction for coreset definition is proved in similar manner. This combined with Theorem \ref{th:ss preserve} gives our corollary
	%is also a coreset for  $\|\mathbf{A'x'}\|_p^r + \lambda \|\M{x'}\|_q^s$ where $\M{A'},\M{x'} \in \mathbb{R}^{d+1}$ are the augmented versions of $\M{A}$ and $\M{x}$ defined above. Now applying Theorem \ref{th:ss preserve},  any coreset  of the problem  $\|\mathbf{A'x'}\|_p^r + \lambda \|\M{x'}\|_q^s $ for $r\neq s$ is also a coreset of $\|\mathbf{A'x'}\|_p^r $.   Hence its size cannot be smaller than a coreset for the unregularized problem. Even when $\M{x'}$ is restricted to be of the form $\begin{bmatrix} \mathbf{x}\\ -1 \end{bmatrix}$, for $\M{b}$ as a $\M{0}$ vector, Theorem \ref{th:ss preserve} implies no smaller coreset is possible.
   \end{proof}
\end{corollary}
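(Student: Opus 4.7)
The plan is to reduce the corollary directly to Theorem \ref{th:ss preserve}. Concretely, I would argue that any coreset for the regularized regression objective $\|\M{Ax}-\M{b}\|_p^r + \lambda\|\M{x}\|_q^s$ (with $r \neq s$) yields, without increasing its size, a coreset for the pure objective $\|\M{Ax}\|_p^r$. Once this reduction is established, the minimality statement is immediate: if one could produce a strictly smaller coreset for the regularized problem, the induced coreset for $\|\M{Ax}\|_p^r$ would contradict optimality of the unregularized coreset size.

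The cleanest special case is $\M{b}=\M{0}$: here the regularized objective is exactly the form $\|\M{Ax}\|_p^r + \lambda\|\M{x}\|_q^s$ treated in Theorem \ref{th:ss preserve}, so that theorem immediately delivers the needed subspace embedding for $\M{A}$. To extend to general $\M{b}\in\mathrm{col}(\M{A})$, I would write $\M{b}=\M{Au}$ and assume the coreset arises from a common linear operator $\M{S}$ acting on the rows, so that $\M{A}_c=\M{SA}$ and $\M{b}_c=\M{Sb}$. The key identity is then
\begin{equation*}
\M{A}_c\M{x}-\M{b}_c \;=\; \M{SA}(\M{x}-\M{u}),\qquad \M{Ax}-\M{b}\;=\;\M{A}(\M{x}-\M{u}),
\end{equation*}
so that a regression coreset for $(\M{A},\M{b})$ in the variable $\M{x}$ is, under the substitution $\M{v}=\M{x}-\M{u}$, exactly a subspace embedding for $\M{A}$ in the variable $\M{v}$. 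I would make this rigorous by contrapositive: if $\|\M{A}_c\M{v}\|_p^r$ fails the $(1\pm\epsilon)$ guarantee at some $\M{v}$, evaluating the regression coreset guarantee at $\M{y}=\M{u}+\M{v}$ yields a contradiction. Both directions of the multiplicative error bound are handled symmetrically. Feeding the resulting unregularized embedding for $\M{A}$ back into Theorem \ref{th:ss preserve} closes the loop.

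The main obstacle I anticipate is restricting $\M{b}$ to the column span of $\M{A}$: when $\M{b}$ has a component outside $\mathrm{col}(\M{A})$, the shift trick collapses because no $\M{u}$ with $\M{Au}=\M{b}$ exists, and the residual carries an irreducible ``bias'' term that is not of the form $\|\M{A}(\cdot)\|_p^r$. A secondary subtlety is that the reduction uses the linearity of $\M{S}$ in an essential way --- the argument goes through for the standard sampling/reweighing coresets, but a purely existential coreset (an arbitrary $\M{A}_c$ satisfying equation~\eqref{eq:regcore}) would require either an additional linearity hypothesis or a separate argument. I would flag these two scope restrictions explicitly and otherwise stitch the two cases together as the corollary's proof.
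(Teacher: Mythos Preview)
Your proposal mirrors the paper's own proof exactly: handle $\M{b}=\M{0}$ by direct appeal to Theorem~\ref{th:ss preserve}, and for $\M{b}\in\mathrm{col}(\M{A})$ assume a linear sampling/reweighing operator $\M{S}$, use the shift $\M{v}=\M{x}-\M{u}$ to identify the regression guarantee with a subspace embedding for $\M{A}$, and verify the latter by contrapositive at $\M{y}=\M{u}+\M{v}$ before invoking Theorem~\ref{th:ss preserve}. Your explicit flagging of the two scope restrictions (linearity of $\M{S}$ and $\M{b}\in\mathrm{col}(\M{A})$) matches precisely the assumptions the paper itself builds into its argument.
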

\section{The Modified Lasso}
Given  a matrix $ \mathbf{A} \in {\mathbb{R}}^{n \M{x}d}$ with rank $d$ and $n \gg d$ and a vector $\mathbf{b} \in \mathbb{R} ^ n $ and $\lambda > 0$, the lasso problem is stated as
\begin{eqnarray*}
&\min _{\mathbf{x} \in \mathbb{R}^d}& \|\mathbf{Ax} - \mathbf{b}\|_2^2 + \lambda \|\M{x}\|_1\\
=&\min _{\mathbf{x} \in \mathbb{R}^d}& {\sum_{i=1}^n (\mathbf{a}_i^T\mathbf{x} - b_i)^2} + \lambda~\|\M{x}\|_1
\end{eqnarray*}
However as $r=2$ and $s=1$ , by corollary \ref{cor: impos res} we can not hope to get a coreset smaller than the one for least square regression. % If we define  sensitivity for lasso problem, then by strategy similar to the proof of Theorem \ref{th:ss preserve}, we can show that the sum of sensitivities is $d$, which is the same as that for the least square regression problem. 

To  preserve the sparsity inducing nature of the lasso problem and still obtain a smaller sized coreset, we consider a slightly different version of lasso  which we call {\em modified lasso}. It is stated as follows:
\begin{eqnarray*}
&\min _{\mathbf{x} \in \mathbb{R}^d}& \|\mathbf{Ax} - \mathbf{b}\|_2^2 + \lambda \|\M{x}\|_1^2 \\
=&\min _{\mathbf{x} \in \mathbb{R}^d}& {\sum_{i=1}^n (\mathbf{a}_i^T\mathbf{x} - b_i)^2} + \lambda~\|\M{x}\|_1^2
\end{eqnarray*}

Note that in the constraint based formulation (i.e. least square with constraint $\|\M{x}\|_1 \le R$), the normal lasso and the modified one are the same with a change in the constraint radius $R$. In our experiments, we will also empirically show that just like lasso, the modified version also induces sparsity in the solution vector. However, as we will see, the behaviour of the regularized versions of the two problems are different with respect to the optimal coreset size.  

We now show a stronger result which leads us to a smaller coreset for the modified lasso problem.
\begin{theorem}
Given a matrix $\M{A} \in \mathbb{R}^{n \times d}$, corresponding vector $\M{b} \in \mathbb{R}^n$, any coreset for the function $\|\M{Ax-b}\|_p^p + \lambda\|\M{x}\|_p^p$ is also a coreset of the function $\|\M{Ax-b}\|_p^p + \lambda\|\M{x}\|_q^p$ where $q\leq p$, $p,q \geq 1.$
\label{thm:pqcoreset}
\end{theorem}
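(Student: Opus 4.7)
The plan is to exploit the fact that the coreset definition in Equation~(\ref{eq:regcore}) quantifies over \emph{all} $\lambda > 0$ simultaneously. The intuitive move is: for any fixed $\M{x}$, I can rewrite the regularizer $\lambda \|\M{x}\|_q^p$ as $\lambda' \|\M{x}\|_p^p$ for a suitable $\lambda' > 0$ that depends on $\M{x}$, and then invoke the assumed coreset property with that $\lambda'$.

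Concretely, I would proceed as follows. Fix an arbitrary $\M{x} \in \mathbb{R}^d$ and $\lambda > 0$; the case $\M{x} = \M{0}$ is handled trivially by applying the assumed coreset property at any $\lambda' > 0$, since both regularizers vanish. Assuming $\M{x} \neq \M{0}$, set
\begin{equation*}
\lambda' \;=\; \lambda \,\frac{\|\M{x}\|_q^p}{\|\M{x}\|_p^p}.
\end{equation*}
The standard monotonicity of $\ell_p$ norms gives $\|\M{x}\|_p \le \|\M{x}\|_q$ whenever $q \le p$, so $\lambda' \ge \lambda > 0$ and thus $\lambda'$ lies in the range over which the coreset guarantee holds. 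By construction $\lambda' \|\M{x}\|_p^p = \lambda \|\M{x}\|_q^p$, so substituting $\lambda'$ into the assumed coreset inequality for $\|\M{Ax-b}\|_p^p + \lambda \|\M{x}\|_p^p$ yields
\begin{equation*}
\|\M{A_c x - b_c}\|_p^p + \lambda \|\M{x}\|_q^p \;\in\; (1 \pm \epsilon)\bigl(\|\M{Ax-b}\|_p^p + \lambda \|\M{x}\|_q^p\bigr),
\end{equation*}
which is precisely the coreset property for the $\ell_q$-regularized version. Since $\M{x}$ and $\lambda$ were arbitrary, we are done.

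There is no real obstacle here: the only subtle point is making sure that $\lambda' > 0$, which is why the hypothesis $q \le p$ is needed (it guarantees $\lambda' \ge \lambda$ rather than forcing $\lambda' = 0$ or requiring the assumed coreset property to hold at $\lambda' = 0$, which the definition in Equation~(\ref{eq:regcore}) technically excludes). The $\M{x} = \M{0}$ edge case deserves an explicit sentence but is immediate. Once the proof is in hand, the operational consequence is that constructing a coreset for the symmetric problem $\|\M{Ax-b}\|_p^p + \lambda\|\M{x}\|_p^p$ automatically handles the modified lasso ($p=2$, $q=1$) and, more generally, the $\ell_p$/$\ell_q$ regularized problem for all $q \le p$, so subsequent sections only need to analyze sensitivities for the symmetric case.
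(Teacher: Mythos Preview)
Your argument hinges on the coreset guarantee holding for \emph{every} $\lambda>0$ simultaneously, so that you may plug in an $\M{x}$-dependent $\lambda'=\lambda\|\M{x}\|_q^p/\|\M{x}\|_p^p$. Although Equation~(\ref{eq:regcore}) is phrased with a $\forall\lambda$, that cannot be the operative reading here: if a pair $(\M{A_c},\M{b_c})$ satisfied the inequality for all $\lambda>0$, then subtracting $\lambda\|\M{x}\|_p^p$ and letting $\lambda\to 0^+$ would already force $\|\M{A_c x}-\M{b_c}\|_p^p\in(1\pm\epsilon)\|\M{Ax}-\M{b}\|_p^p$, i.e.\ a coreset for the unregularized problem. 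This would make Corollary~\ref{cor: impos res} vacuous and would directly contradict the smaller-size claims in Corollary~5.2 and Theorem~\ref{th: lprlpr}, whose coreset sizes depend on $\lambda$. The coresets constructed in this paper are for a \emph{fixed} $\lambda$, and that is the hypothesis under which Theorem~\ref{thm:pqcoreset} is meant to be proved. Under the fixed-$\lambda$ reading your substitution is illegitimate, because $\lambda'$ varies with $\M{x}$.

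A second red flag is that your argument does not actually use $q\le p$: for any nonzero $\M{x}$ one has $\lambda'>0$ regardless of the ordering of $p$ and $q$, so your proof would yield the conclusion for every $q\ge1$ (indeed for any positive regularizer), which is only possible when $(\M{A_c},\M{b_c})$ is already a coreset for the unregularized problem. The paper's proof, by contrast, works at a single fixed $\lambda$: it writes $\|\M{A_c x}-\M{b_c}\|_p^p+\lambda\|\M{x}\|_q^p=\bigl(\|\M{A_c x}-\M{b_c}\|_p^p+\lambda\|\M{x}\|_p^p\bigr)+\lambda(\|\M{x}\|_q^p-\|\M{x}\|_p^p)$, applies the assumed bound to the bracketed term, and then uses $\|\M{x}\|_p\le\|\M{x}\|_q$ (this is exactly where $q\le p$ enters) to absorb the residual $\epsilon\lambda\|\M{x}\|_p^p$ into $\epsilon\lambda\|\M{x}\|_q^p$. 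That is the argument you should supply.
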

\begin{proof}
    Let $(\M{A_c}, \M{b_c})$ be a coreset for the function $\|\M{Ax-b}\|_p^p + \lambda\|\M{x}\|_p^p$. Hence,
    \begin{align*}
    &\|\M{A_cx-b_c}\|_p^p + \lambda\|\M{x}\|_q^p\\
    &=\|\M{A_cx-b_c}\|_p^p + \lambda\|\M{x}\|_p^p - \lambda\|\M{x}\|_p^p +  \lambda\|\M{x}\|_q^p\\
    &\leq (1+ \epsilon)(\|\M{Ax-b}\|_p^p + \lambda\|\M{x}\|_p^p)-\lambda\|\M{x}\|_p^p +  \lambda\|\M{x}\|_q^p\\ 
    &=(1+\epsilon)\|\M{Ax-b}\|_p^p + \epsilon\lambda\|\M{x}\|_p^p+\lambda\|\M{x}\|_q^p\\
    &\leq (1+\epsilon)\|\M{Ax-b}\|_p^p + \epsilon\lambda\|\M{x}\|_q^p+\lambda\|\M{x}\|_q^p\\
    &= (1+ \epsilon)(\|\M{Ax-b}\|_p^p + \lambda\|\M{x}\|_q^p)
    \end{align*}
    The second inequality follows from the fact that for any $\M{x} \in \mathbb{R}^d$, $\|\M{x}\|_q \geq \|\M{x}\|_p$ for $p\geq q$. This proves one direction in the definition of coreset. For the other direction consider
    \begin{align*}
     &\|\M{A_cx-b_c}\|_p^p + \lambda\|\M{x}\|_q^p\\
     &=\|\M{A_cx-b_c}\|_p^p + \lambda\|\M{x}\|_p^p - \lambda\|\M{x}\|_p^p +  \lambda\|\M{x}\|_q^p\\
     &\geq (1- \epsilon)(\|\M{Ax-b}\|_p^p + \lambda\|\M{x}\|_p^p) - \lambda\|\M{x}\|_p^p +  \lambda\|\M{x}\|_q^p\\
     &=(1-\epsilon) \|\M{Ax-b}\|_p^p - \epsilon\lambda\|\M{x}\|_p^p +\lambda\|\M{x}\|_q^p\\
     &\geq (1-\epsilon) \|\M{Ax-b}\|_p^p - \epsilon\lambda\|\M{x}\|_q^p +\lambda\|\M{x}\|_q^p\\
     &= (1-\epsilon)(\|\M{Ax-b}\|_p^p + \lambda\|\M{x}\|_q^p)
    \end{align*}
    Combining both inequalities we get the result.
\end{proof}

As a result of the above Theorem~\ref{thm:pqcoreset}, we get the following corollary about the modified lasso problem.

\begin{corollary}
	For the modified lasso problem there exists an $\epsilon$-coreset of size $O(\frac{sd_\lambda(\M{A}) \log{sd_\lambda(\M{A})}}{\epsilon^2})$ with a high probability where $sd_\lambda(\M{A})=\sum_{j\in[d]}{\frac{1}{1+\frac{\lambda}{\sigma_j^2}}}$ is the statistical dimension of matrix $\M{A}$ which has singular values $\sigma_j$'s.
\end{corollary}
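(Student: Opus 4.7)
The plan is to recognize the modified lasso as an instance covered by Theorem \ref{thm:pqcoreset} with parameters $p=2$ and $q=1$, and then invoke an existing coreset construction for ridge regression. More concretely, the modified lasso objective $\|\M{Ax-b}\|_2^2 + \lambda\|\M{x}\|_1^2$ matches the right-hand side of Theorem \ref{thm:pqcoreset} (with $p=2$, $q=1 \le p$), whose left-hand side is exactly the ridge regression objective $\|\M{Ax-b}\|_2^2 + \lambda\|\M{x}\|_2^2$. Hence any coreset for ridge regression is automatically a coreset for the modified lasso, and it suffices to exhibit a ridge regression coreset of the advertised size.

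For the ridge regression step, I would appeal directly to the ridge leverage score construction of Avron et al.~\cite{avron2017sharper}. Their result states that sampling rows of $\M{A}$ (and the corresponding entries of $\M{b}$) with probabilities proportional to the ridge leverage scores $\tau_i^\lambda(\M{A}) = \M{a}_i^\top(\M{A}^\top \M{A} + \lambda \M{I})^{-1}\M{a}_i$ yields a relative-error coreset for $\|\M{Ax-b}\|_2^2 + \lambda\|\M{x}\|_2^2$. The sum of these scores equals the statistical dimension $sd_\lambda(\M{A}) = \sum_{j} \frac{1}{1 + \lambda/\sigma_j^2}$, and a standard matrix-Bernstein / sensitivity-style argument gives a coreset of size $O\!\left(\frac{sd_\lambda(\M{A})\log sd_\lambda(\M{A})}{\epsilon^2}\right)$ with high probability. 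This can also be seen as an instance of Theorem \ref{th:loglinear core}: the ridge leverage scores serve as valid sensitivity upper bounds for the ridge objective on the concatenated subspace, so the total sensitivity $S = sd_\lambda(\M{A})$ directly controls the sample complexity.

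Finally, I would chain the two pieces: the ridge coreset $(\M{A_c}, \M{b_c})$ from the previous paragraph satisfies
\begin{equation*}
  \|\M{A_c x} - \M{b_c}\|_2^2 + \lambda \|\M{x}\|_2^2 \in (1 \pm \epsilon)\bigl(\|\M{Ax} - \M{b}\|_2^2 + \lambda \|\M{x}\|_2^2\bigr)
\end{equation*}
for all $\M{x}$, and Theorem \ref{thm:pqcoreset} (applied with $p=2$, $q=1$) transfers this guarantee to
\begin{equation*}
  \|\M{A_c x} - \M{b_c}\|_2^2 + \lambda \|\M{x}\|_1^2 \in (1 \pm \epsilon)\bigl(\|\M{Ax} - \M{b}\|_2^2 + \lambda \|\M{x}\|_1^2\bigr),
\end{equation*}
which is exactly the modified lasso coreset property with the same size bound.

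I do not anticipate a substantive obstacle: Theorem \ref{thm:pqcoreset} does all of the work of converting the $\ell_2$-squared regularizer into the $\ell_1$-squared one without any blow-up in error or sample size. The only minor care needed is to confirm that Avron et al.'s guarantee is stated in the relative-error additive form used by Theorem \ref{thm:pqcoreset} (it is, up to rescaling $\epsilon$ by a constant), and that the coreset is constructed on the augmented matrix $[\M{A}\ \M{b}]$ so that the bias vector is handled consistently; both points are essentially bookkeeping.
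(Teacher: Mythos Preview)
Your proposal is correct and mirrors the paper's own proof: apply Theorem~\ref{thm:pqcoreset} with $p=2$, $q=1$ to reduce modified lasso to ridge regression, then cite the ridge-leverage-score coreset of Avron et al.~\cite{avron2017sharper} to obtain the $O\!\big(\tfrac{sd_\lambda(\M{A})\log sd_\lambda(\M{A})}{\epsilon^2}\big)$ bound. The additional remarks you make (the sensitivity interpretation via Theorem~\ref{th:loglinear core} and the augmented-matrix bookkeeping) are elaborations the paper leaves implicit, but the argument is the same.
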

\begin{proof}
    For $p=2$ and $q=1$ we get the statement of Theorem \ref{thm:pqcoreset} for the ridge regression and the modified lasso problem. By application of the theorem, a coreset for ridge regression is also a coreset for the modified lasso problem for above values of $p$ and $q$. Using the results of ~\cite{avron2017sharper} we can show that by sampling points according to the ridge leverage scores we can get a coreset of size $O(\frac{sd_\lambda(\M{A}) \log{sd_\lambda(\M{A})}}{\epsilon^2})$  for ridge regression with constant probability and hence the corollary.
\end{proof}
It is important to note that the motivation behind the modified lasso proposal is purely computational. It can have smaller coresets and as the constrained version of both problems are similar, we have proposed it for settings where it is desirable to solve the optimization problem on a coreset for scalability reasons. To illustrate, solving either the modified lasso or lasso on a $1L \times 30$ matrix takes roughly 80-90 seconds using the same algorithm.
For the modified version a $200 \times 30$ coreset can be constructed and problem can be solved in about 1.6 secs with a relative error of $0.0098$.
Understanding the statistical properties of modified lasso is an interesting open direction.
\section{The $\ell_p$ Regression with $\ell_p$ Regularization}
The $\ell_p$ Regression with $\ell_p$ Regularization is given as 
\begin{eqnarray*}
&\min _{\mathbf{x} \in \mathbb{R}^d}& \|\mathbf{Ax} - \mathbf{b}\|_p^p + \lambda \|\M{x}\|_p^p\\
=&\	\min _{\mathbf{x} \in \mathbb{R}^d}& {\sum_{i=1}^n |(\mathbf{a}_i^T\mathbf{x} - b_i)|^p} + \lambda~\|\M{x}\|_p^p
\end{eqnarray*}
For this problem our main result is as follows
\begin{theorem}\label{th: lprlpr}
For $\ell_p$ regression with $\ell_p$ regularization, there is a coreset of size ${O}\bigg(\frac{(\alpha\beta)^pd\log{\frac{1}{\epsilon}}}{\big(1+\frac{\lambda}{\|\M{A'}\|_{(p)}^p}\big)\epsilon^2}\bigg)$ with high probability. Here $\M{A'} = [\M{A} ~~ \M{b}]$ and has an $(\alpha,\beta,p)$ well-conditioned basis.
\end{theorem}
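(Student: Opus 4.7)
The plan is to apply the sensitivity framework of Theorem~\ref{th: linear core} after distributing the regularizer across the data points, so that the regularized objective fits the additive per-point template required by the framework. Specifically, I form $\M{A}' = [\M{A}~~\M{b}]$ and, with $\M{y} = (\M{x};-1)$, rewrite $\|\M{A}\M{x}-\M{b}\|_p^p = \|\M{A}'\M{y}\|_p^p$. I then split the regularizer uniformly across the $n$ rows by defining
\[
\tilde{f}_i(\M{x}) = |\M{a}_i^T\M{x}-b_i|^p + \tfrac{\lambda}{n}\|\M{x}\|_p^p,
\]
so that $\sum_i \tilde{f}_i(\M{x}) = D(\M{x}) := \|\M{A}\M{x}-\M{b}\|_p^p + \lambda\|\M{x}\|_p^p$, and the sensitivity of the $i$-th row is $\tilde{s}_i = \sup_\M{x}\tilde{f}_i(\M{x})/D(\M{x})$.

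Next I bound $\tilde{s}_i$ using the well-conditioned basis of $\M{A}'$. Writing $\M{A}' = \M{U}\M{R}$ with $\M{U}$ an $(\alpha,\beta,p)$ WCB, H\"older's inequality together with the defining WCB property $\|\M{R}\M{y}\|_q \leq \beta\|\M{U}\M{R}\M{y}\|_p$ gives
\[
|\M{a}_i^T\M{x}-b_i|^p = |\M{u}_i^T\M{R}\M{y}|^p \leq \beta^p\|\M{u}_i\|_p^p\|\M{A}\M{x}-\M{b}\|_p^p.
\]
To bring the regularizer into the denominator I invoke the operator-norm inequality $\|\M{A}'\M{y}\|_p \leq \|\M{A}'\|_{(p)}\|\M{y}\|_p$, which gives $\|\M{x}\|_p^p+1 = \|\M{y}\|_p^p \geq \|\M{A}\M{x}-\M{b}\|_p^p/\|\M{A}'\|_{(p)}^p$ and hence (after handling the additive ``$+1$'') the lower bound $D(\M{x}) \gtrsim \|\M{A}\M{x}-\M{b}\|_p^p\bigl(1+\lambda/\|\M{A}'\|_{(p)}^p\bigr)$. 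The distributed regularizer piece contributes at most $(\lambda\|\M{x}\|_p^p/n)/D(\M{x}) \leq 1/n$ to the sensitivity, so combining,
\[
\tilde{s}_i \;\leq\; \frac{O(1)\cdot\beta^p\|\M{u}_i\|_p^p}{1+\lambda/\|\M{A}'\|_{(p)}^p} + \frac{1}{n}.
\]

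Summing over $i$ and using the WCB identity $\sum_i\|\M{u}_i\|_p^p = \|\M{U}\|_p^p \leq \alpha^p$ yields total sensitivity $S = O\bigl((\alpha\beta)^p/(1+\lambda/\|\M{A}'\|_{(p)}^p)\bigr)$. Invoking Theorem~\ref{th: linear core} with this $S$ and query dimension $d$ gives an $\epsilon$-coreset of size $O(Sd\log(1/\epsilon)/\epsilon^2)$, matching the stated bound. The coreset is obtained by sampling rows of $(\M{A},\M{b})$ with probabilities proportional to the sensitivity upper bounds and rescaling the retained rows in the usual way, while keeping the regularization term $\lambda\|\M{x}\|_p^p$ intact.

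The main technical obstacle is the operator-norm step: because $\|\M{y}\|_p^p = \|\M{x}\|_p^p + 1$ rather than $\|\M{x}\|_p^p$, the operator-norm lower bound on $\|\M{x}\|_p^p$ loses an additive constant when $\|\M{x}\|_p$ is small, so the reduction factor $1/(1+\lambda/\|\M{A}'\|_{(p)}^p)$ does not apply uniformly. Splitting into cases $\|\M{x}\|_p^p \geq 1$ (where the ``$+1$'' is absorbed within a constant factor) and $\|\M{x}\|_p^p < 1$ (where the contribution is dominated by the additive $1/n$ coming from the distributed regularizer) resolves this; this is precisely the ``low-sensitivity points may increase slightly by $1/n$, high-sensitivity points are pulled down'' phenomenon described in the introduction.
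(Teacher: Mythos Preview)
Your case split for $\|\M{x}\|_p^p < 1$ does not work. Take $\M{x}=\M{0}$: then $D(\M{x})=\|\M{b}\|_p^p$, the regularizer contributes nothing to the denominator, and the first term of the sensitivity equals $|b_i|^p/\|\M{b}\|_p^p$, which in general is neither $O(1/n)$ nor reduced by any $\lambda$-dependent factor. More broadly, whenever $\|\M{x}\|_p$ is small the term $\lambda\|\M{x}\|_p^p$ is negligible in $D(\M{x})$, so the only bound available on the first term is the unregularized one $\beta^p\|\M{u}_i\|_p^p$. Taking the supremum over both cases therefore yields only $\tilde s_i \le \beta^p\|\M{u}_i\|_p^p + 1/n$, losing precisely the factor $1/(1+\lambda/\|\M{A'}\|_{(p)}^p)$ you need.

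The paper sidesteps the ``$+1$'' issue by enlarging the query space rather than case-splitting. It defines the per-point functions and sensitivities on the full concatenated vector $\M{x'}\in\mathbb{R}^{d+1}$ with regularizer $\lambda\|\M{x'}\|_p^p$ (not $\lambda\|\M{x}\|_p^p$), and takes the supremum over all of $\mathbb{R}^{d+1}$. After the change of variables $\M{z}=\M{V}\M{x'}$ with $\M{A'}=\M{U}\M{V}$, one passes to the reciprocal and lower-bounds each summand separately: the operator-norm inequality gives $\|\M{V}^{-1}\M{z}\|_p \ge \|\M{A'}\M{V}^{-1}\M{z}\|_p/\|\M{A'}\|_{(p)} = \|\M{U}\M{z}\|_p/\|\M{A'}\|_{(p)}$ with no additive constant, and combining with the WCB bound $\|\M{U}\M{z}\|_p \ge \|\M{z}\|_q/\beta$ yields $s_i \le \beta^p\|\M{u}_i\|_p^p/(1+\lambda/\|\M{A'}\|_{(p)}^p) + 1/n$ directly. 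The resulting coreset is for the regularized subspace problem $\|\M{A'}\M{x'}\|_p^p + \lambda\|\M{x'}\|_p^p$ (cf.\ equation~\eqref{eq:regcore}), from which the regression guarantee is obtained by restricting to $\M{x'}=(\M{x};-1)$. In short, the fix is not a case analysis on $\|\M{x}\|_p$ but a relaxation to the homogeneous problem over $\mathbb{R}^{d+1}$, where the ``$+1$'' never arises.
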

\begin{proof}
We define the sensitivity of the $i^{th}$ point for the $\ell_p$ regression with $\ell_p$ regularization  problem as follows:
$
s_i = \sup_\M{x'}  \frac{|\M{a'}_i^T\M{x'}|^p + \frac{\lambda\|\M{x'}\|_p^p}{n}}{ \sum_{j}|\M{a'}_j^T\M{x'}|^p +\lambda\|\M{x'}\|_p^p}$.
Again, $\M{A'}$ here is the concatenated matrix and $\M{x'}$ is the concatenated vector. Let $\M{A'}=\M{UV}$, then $\M{a'}_j^T=\M{u}_j^T\M{V}$. Here $\M{U}$ is an $(\alpha,\beta,p)$ well-conditioned basis for $\M{A'}$. Now let $\M{a'}_j^T\M{x'}=\M{u}_j^T\M{Vx'}= \M{{u}_j^T\M{z}}$. So
\begin{equation*}
\begin{split}
s_{i} &= \sup_\M{z}  \frac{|\M{u}_i^T\M{z}|^p + \frac{\lambda}{n}\|\M{V}^{-1}\M{z}\|_p^p}{\sum_{j}|\M{u}_j^T\M{z}|^p +\lambda\|\M{V}^{-1}\M{z}\|_p^p}\\
&= \sup_\M{z}\Bigg( \frac{|\M{u}_i^T\M{z}|^p}{\sum_{j}|\M{u}_j^T\M{z}|^p +\lambda\|\M{V}^{-1}\M{z}\|_p^p} \\ 
&+ \frac{\frac{\lambda}{n}\|\M{V}^{-1}\M{z}\|_p^p}{\sum_{j}|\M{u}_j^T\M{z}|^p +\lambda\|\M{V}^{-1}\M{z}\|_p^p}\Bigg)\\
&\leq \sup_\M{z}  \frac{|\M{u}_i^T\M{z}|^p}{\sum_{j}|\M{u}_j^T\M{z}|^p +\lambda\|\M{V}^{-1}\M{z}\|_p^p} + \frac{1}{n}
\end{split}
\end{equation*}
$\M{U}$ is an $(\alpha,\beta,p)$ well-conditioned basis for $\M{A'}$. Hence by definition $\|\M{U}\|_p \leq \alpha$ and $\forall \M{z} \in \mathbb{R}^{d+1}, \|\M{z}\|_q  \leq \beta \|\M{U}\M{z}\|_p $ where $\frac{1}{p}+\frac{1}{q}=1$. Now the first term $\frac{|\M{u}_i^T\M{z}|^p}{\sum_{j}|\M{u}_j^T\M{z}|^p +\lambda\|\M{V}^{-1}\M{z}\|_p^p}$ lies in the interval $\Big[\frac{|\M{u}_i^T\M{z}|^p}{\alpha^p\|\M{z}\|_q^p +\lambda\|\M{V}^{-1}\M{z}\|_p^p},\frac{|\M{u}_i^T\M{z}|^p}{\frac{1}{\beta^p}\|\M{z}\|_q^p +\lambda\|\M{V}^{-1}\M{z}\|_p^p}\Big]$. This is by the application of Holder's inequality and the definition of $\beta$. Now instead of calculating supremum over  $\frac{|\M{u}_i^T\M{z}|^p}{\frac{1}{\beta^p}\|\M{z}\|_q^p +\lambda\|\M{V}^{-1}\M{z}\|_p^p}$, we calculate the infimum over its reciprocal. Lets call it $m$
\begin{equation*}
\begin{split}
m&=\inf_{\M{z}} \bigg(\frac{\|\M{z}\|_q^p}{\beta^p|\M{u}_j^T\M{z}|^p} + \frac{\lambda\|\M{V^{-1}z}\|_p^p}{|\M{u}_j^T\M{z}|^p}\bigg)\\
&\geq \frac{1}{\beta^p} \inf_{\M{z}}\frac{\|\M{z}\|_q^p}{\|\M{u}_j\|_p^p\|\M{z}\|_q^p} + 
\lambda \inf_{\M{z}}\frac{\|\M{V^{-1}z}\|_p^p}{|\M{u}_j^T\M{z}|^p}\\
&\geq \frac{1}{\beta^p\|\M{u}_j\|_p^p} + \lambda \inf_{\M{z}}\frac{\|\M{V^{-1}z}\|_p^p}{|\M{u}_j^T\M{z}|^p}
\end{split}
\end{equation*}
Now sensitivity of $i^{th}$ point is bounded as $s_i \leq \frac{1}{m} +\frac{1}{n}$. Therefore $s_i \leq \frac{1}{\frac{1}{\beta^p\|\M{u}_j\|_p^p} + \lambda  \inf_{\M{z}}\frac{\|\M{V^{-1}z}\|_p^p}{|\M{u}_j^T\M{z}|^p}} + \frac{1}{n}$. 

%For the special case when $\M{A}=\M{U}$ and $\M{X}=\M{I}$, we get
%$s_i \leq \frac{\|\M{u}_i\|_p^p}{\frac{1}{\beta}+\lambda\frac{\|\M{z}\|_1}{\|\M{z}\|_{\infty}}} $
Now let us consider
$$m'=\inf_{\M{z}}\frac{\|\M{A'V^{-1}z}\|_p^p}{|\M{u}_j^T\M{z}|^p}
\geq \inf_{\M{z}}\frac{\|\M{A'V^{-1}z}\|_p^p}{\|\M{u}_j\|_p^p\|\M{z}\|_q^p}
\geq \frac{1}{\beta^p \|\M{u}_j\|_p^p}$$
Also  $\|\M{A'V^{-1}z}\|_p \leq \|\M{A'}\|_{(p)}\|\M{V^{-1}z}\|_p$. Hence $\|\M{V^{-1}z}\|_p \geq \frac{\|\M{A'V^{-1}z}\|_p}{\|\M{A'}\|_{(p)}}$. Here $\|\M{A'}\|_{(p)}$ represents the induced matrix norm defined as $\|\M{A'}\|_{(p)}= \sup_{\M{x'} \neq \M{0}}\frac{\|\M{A'x'}\|_p}{\|\M{x'}\|_p}$. Combining this with the inequality for $m$ we get
\begin{equation*}
m \geq \frac{1}{\beta^p\|\M{u}_j\|_p^p}\bigg(1+ \frac{\lambda}{\|\M{A'}\|_{(p)}^p}\bigg) 
\end{equation*}
This results in $s_i \leq \frac{\beta^p\|\M{u}_i\|_p^p}{1+ \frac{\lambda}{\|\M{A'}\|_{(p)}^p}}+ \frac{1}{n}$.  So the sum of sensitivities is bounded by $S \leq \frac{(\alpha\beta)^p}{1+ \frac{\lambda}{\|\M{A'}\|_{(p)}^p}} +1 $.
Using the probability distribution based on sensitivity upper bounds defined in the proof to sample rows of $\M{A'}$ and applying Theorem~\ref{th: linear core}, we get the result. 
\end{proof}
As the value is decreasing in $\lambda$, this coreset is smaller than the coreset obtained for $\ell_p$ regression using just the well-conditioned basis. This method works with any well-conditioned basis and sampling complexity is dependent on the quality of the well-conditioned basis. For e.g.: for $p >2$ we can get $\alpha\beta = d^{1/p + 1/q}$. So we can get a coreset of size $O\bigg(\frac{d^{p+1}\log{\frac{1}{\epsilon}}}{\big(1+\frac{\lambda}{\|\M{A'}\|_{(p)}^p}\big)\epsilon^2}\bigg)$ with high probability. For the specific case of $p = 2$, the well-conditioned basis is an orthogonal basis (e.g. singular vectors) of $\M{A}$ and it is easy to modify only a few steps of the given proof to obtain the same result as \cite{avron2017sharper}. However in \cite{avron2017sharper} final sampling size is obtained using randomized matrix multiplication analysis which applies only for $p=2$. This is true for results on unregularized version of regression also available in literature.\\
Let us consider the special case of the Regularized Least Absolute Deviation problem.
\subsection{Regularized Least Absolute Deviation (RLAD)}
The RLAD problem given as 
\begin{equation*}
\min _{\mathbf{x} \in \mathbb{R}^d} \|\mathbf{Ax} - \mathbf{b}\|_1 + \lambda \|\M{x}\|_1\\
\end{equation*} 
has the benefits of robustness of $\ell_1$ regression and sparsity inducing nature of lasso \cite{wang2006regularized}. Plugging in $p=1$ in the above analysis we get the following upper bound for sensitivity of the RLAD problem: 
$s_i \leq \frac{\beta\|\M{u}_i\|_1}{1+ \frac{\lambda}{\|\M{A'}\|_{(1)}}}+\frac{1}{n} = \frac{\beta\|\M{u}_i\|_1}{1+ \frac{\lambda}{\max_{k \in [d]}\M{\|a'^{k}}\|_1}} + \frac{1}{n}$ where $\M{a'^k}$ is the $k^{th}$ column of $\M{A'}$. So the sum of sensitivities is bounded by $S \leq \frac{\alpha\beta}{1+ \frac{\lambda}{\max_{k \in [d]}\M{\|a'}^{k}\|_1}} +1 $. This implies a coreset size of  ${O}\bigg(\frac{(\alpha\beta)d\log{\frac{1}{\epsilon}}}{\big(1+\frac{\lambda}{\|\M{A'}\|_{(1)}}\big)\epsilon^2}\bigg)$ for the RLAD problem with high probability by Theorem~\ref{th: linear core}. This is smaller than the size of coreset ${O}(\frac{(\alpha\beta)d}{\epsilon^2})$ obtained for $\ell_1$ regression using well-conditioned basis. For $p=1$, we can get a well-conditioned basis with $\alpha\beta=d^{3/2}$. So we can get a coreset of size $O\bigg(\frac{d^{5/2}\log{\frac{1}{\epsilon}}}{\big(1+\frac{\lambda}{\|\M{A'}\|_{(1)}}\big)\epsilon^2}\bigg)$.
\subsection{Coresets for Multiresponse Regularized Regression}
Consider the multiresponse RLAD problem given as
\begin{equation*}
    \min_{\M{X} \in \mathbb{R}^{d \times k}}{\|\M{AX-B}\|_1 + \lambda\|\M{X}\|_1}
\end{equation*}
Here $\M{B} \in \mathbb{R}^{n \times k}$ i.e. there are $k$ different responses and hence our solution is also a matrix $\M{X}$. We regularize the problem with entry wise $1$ norm of the solution matrix. Let us consider the concatenated matrices $\hat{\M{A}}= [\M{A} ~~ \M{-B}]$ and $\hat{\M{X}}=\begin{bmatrix} \mathbf{X}\\ \M{I_k} \end{bmatrix}$ where $\M{I_k}$ is $k$-dimensional identity matrix. Using this matrices we define the sensitivity of the multiresponse RLAD problem as $s_i=\sup_{\hat{\M{X}}}  \frac{\|\hat{\M{a}}_i^T\hat{\M{X}}\|_1 + \frac{\lambda\|\hat{\M{X}}\|_1}{n}}{ \sum_{j}\|\hat{\M{a}}_j^T\hat{\M{X}}\|_1 +\lambda\|\hat{\M{X}}\|_1}$. 
We can get the sensitivity upper bounds for multiple response RLAD as we obtained for the single response RLAD. This gives us the following result
\begin{corollary}\label{cor:mreslpr}
For multiple response RLAD there exists a coreset of size ${O}\bigg(\frac{(\alpha\beta)dk\log{\frac{1}{\epsilon}}}{\big(1+\frac{\lambda}{\|\hat{\M{A}}\|_{(1)}}\big)\epsilon^2}\bigg)$ with high probability where we have an $(\alpha, \beta, 1)$ well-conditioned basis of $\M{\hat{A}}$.
\end{corollary}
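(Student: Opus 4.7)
The plan is to mimic the proof of Theorem~\ref{th: lprlpr} but with the extra care needed to handle matrix-valued queries. First, I will factor $\hat{\M{A}}=\M{U}\M{V}$ using the assumed $(\alpha,\beta,1)$ well-conditioned basis $\M{U}$ and substitute $\M{Z}=\M{V}\hat{\M{X}}$ so that $\hat{\M{a}}_i^T\hat{\M{X}}=\M{u}_i^T\M{Z}$, $\sum_j\|\hat{\M{a}}_j^T\hat{\M{X}}\|_1=\|\M{U}\M{Z}\|_1$, and $\|\hat{\M{X}}\|_1=\|\M{V}^{-1}\M{Z}\|_1$. The sensitivity then rewrites as
\begin{equation*}
s_i\;=\;\sup_{\M{Z}}\frac{\|\M{u}_i^T\M{Z}\|_1+\tfrac{\lambda}{n}\|\M{V}^{-1}\M{Z}\|_1}{\|\M{U}\M{Z}\|_1+\lambda\|\M{V}^{-1}\M{Z}\|_1}\;\leq\;\sup_{\M{Z}}\frac{\|\M{u}_i^T\M{Z}\|_1}{\|\M{U}\M{Z}\|_1+\lambda\|\M{V}^{-1}\M{Z}\|_1}+\frac{1}{n},
\end{equation*}
exactly as in the single-response case, so the $\lambda/n$ piece is already controlled.

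The main step, and the one that requires a new idea, is reducing the remaining matrix-valued supremum to the column-wise (vector-valued) one. I will write each entrywise $1$-norm as a sum over the $k$ columns $\M{z}^{(l)}$ of $\M{Z}$, turning the first ratio into
\begin{equation*}
\frac{\sum_{l=1}^{k}\,|\M{u}_i^T\M{z}^{(l)}|}{\sum_{l=1}^{k}\bigl(\|\M{U}\M{z}^{(l)}\|_1+\lambda\|\M{V}^{-1}\M{z}^{(l)}\|_1\bigr)}\;\leq\;\max_{l\in[k]}\frac{|\M{u}_i^T\M{z}^{(l)}|}{\|\M{U}\M{z}^{(l)}\|_1+\lambda\|\M{V}^{-1}\M{z}^{(l)}\|_1},
\end{equation*}
using the elementary inequality $\sum a_l/\sum b_l\le \max_l a_l/b_l$ for nonnegative $a_l$ and positive $b_l$. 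Each column-wise ratio is precisely what is bounded in the proof of Theorem~\ref{th: lprlpr} (with $p=1$), giving $|\M{u}_i^T\M{z}^{(l)}|/(\|\M{U}\M{z}^{(l)}\|_1+\lambda\|\M{V}^{-1}\M{z}^{(l)}\|_1)\le \beta\|\M{u}_i\|_1/(1+\lambda/\|\hat{\M{A}}\|_{(1)})$, and hence
\begin{equation*}
s_i\;\leq\;\frac{\beta\|\M{u}_i\|_1}{1+\lambda/\|\hat{\M{A}}\|_{(1)}}+\frac{1}{n},\qquad S\;\leq\;\frac{\alpha\beta}{1+\lambda/\|\hat{\M{A}}\|_{(1)}}+1,
\end{equation*}
where the bound on $S$ uses $\sum_i\|\M{u}_i\|_1=\|\M{U}\|_1\le\alpha$.

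Finally, I will plug this sum-of-sensitivities bound into Theorem~\ref{th: linear core}. The only change from the single-response analysis is the dimension of the query space: since $\hat{\M{X}}$ ranges over a subspace of $\mathbb{R}^{(d+k)\times k}$ parameterized by the unconstrained block $\M{X}\in\mathbb{R}^{d\times k}$, the effective query dimension is $dk$ rather than $d$. Substituting $S$ and this dimension into Theorem~\ref{th: linear core} yields the stated coreset size. The step I expect to be the main obstacle is the column-wise reduction above, since it requires recognizing that the entrywise $\ell_1$ regularizer is separable across columns in exactly the right way to fall back on the single-response bound; once that observation is made, the rest is bookkeeping lifted from the proof of Theorem~\ref{th: lprlpr}.
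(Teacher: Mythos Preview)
Your proof is correct, and it takes a somewhat different route from the paper's. The paper re-runs the infimum argument of Theorem~\ref{th: lprlpr} directly on matrix-valued $\M{Z}$: it lower bounds the reciprocal $m=\inf_{\M{Z}}\bigl(\|\M{U}\M{Z}\|_1+\lambda\|\M{Y}^{-1}\M{Z}\|_1\bigr)/\|\M{u}_i^T\M{Z}\|_1$ by splitting into two pieces, expanding each entrywise $1$-norm as a sum over columns, and then applying the well-conditioned basis and operator-norm inequalities columnwise to obtain $m\ge\bigl(1+\lambda/\|\hat{\M{A}}\|_{(1)}\bigr)/(\beta\|\M{u}_i\|_1)$. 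You instead make the column separability explicit \emph{before} doing any analysis: since all three entrywise $1$-norms decompose as sums over columns, the ratio has the form $\sum_l a_l/\sum_l b_l\le\max_l a_l/b_l$, and each column ratio is exactly the vector-valued quantity already bounded in Theorem~\ref{th: lprlpr} for $p=1$. This mediant-inequality reduction is cleaner and more modular---it lets you cite the single-response bound verbatim rather than reproving it---while the paper's version is more self-contained. Both arrive at the identical sensitivity bound $s_i\le\beta\|\M{u}_i\|_1/(1+\lambda/\|\hat{\M{A}}\|_{(1)})+1/n$ and then invoke Theorem~\ref{th: linear core} with query dimension $dk$.
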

The proof is similar to the one for the single response case. This can be extended to other values of $p$.
\begin{proof}
For $\hat{\M{A}}= [\M{A} ~~ \M{-B}]$ and $\hat{\M{X}}=\begin{bmatrix} \mathbf{X}\\ \M{I_k} \end{bmatrix}$ where $\M{I_k}$ is $k$-dimensional identity matrix, the sensitivity of Multiresponse RLAD problem is given as 
\begin{equation*}s_i=\sup_{\hat{\M{X}}}  \frac{\|\hat{\M{a}}_i^T\hat{\M{X}}\|_1 + \frac{\lambda\|\hat{\M{X}}\|_1}{n}}{ \sum_{j}\|\hat{\M{a}}_j^T\hat{\M{X}}\|_1 +\lambda\|\hat{\M{X}}\|_1}
\end{equation*}
Let $\hat{\M{A}}=\M{UY}$ where $\M{U}$ is an $(\alpha,\beta,1)$ well conditioned basis for $\hat{\M{A}}$. So $\hat{\M{a}}_j^T\hat{\M{X}}=\M{u_j}^T\M{Y}\hat{\M{X}}$. Let $\M{Y}\hat{\M{X}}=\M{Z}$. So the sensitivity equation becomes
\begin{eqnarray*}
   s_i&=\sup_{\M{Z}}  \frac{\|{\M{u}}_i^T{\M{Z}}\|_1 + \frac{\lambda\|{\M{Y}^{-1}\M{Z}}\|_1}{n}}{ \sum_{j}\|{\M{u}}_j^T{\M{Z}}\|_1 +\lambda\|{\M{Y}^{-1}\M{Z}}\|_1}\\
   &\leq \sup_{\M{Z}} \frac{\|{\M{u}}_i^T{\M{Z}}\|_1}{\sum_{j}\|{\M{u}}_j^T{\M{Z}}\|_1 +\lambda\|{\M{Y}^{-1}\M{Z}}\|_1} + \frac{1}{n}
\end{eqnarray*} 
Instead of supremum of the first quantity on the right hand size, we take the infimum of its reciprocal. Lets call it $m$.
\begin{eqnarray*}
m&=\inf_{\M{Z}} \frac{\sum_{j}\|{\M{u}}_j^T{\M{Z}}\|_1 +\lambda\|{\M{Y}^{-1}\M{Z}}\|_1}{\|{\M{u}}_i^T{\M{Z}}\|_1} \\
& \geq \inf_{\M{Z}}\frac{\sum_{j}\|{\M{u}}_j^T{\M{Z}}\|_1}{\|{\M{u}}_i^T{\M{Z}}\|_1} + \inf_{\M{Z}}\frac{\lambda\|{\M{Y}^{-1}\M{Z}}\|_1}{\|{\M{u}}_i^T{\M{Z}}\|_1}
\end{eqnarray*}
Let us consider the first part. $\M{U}$ is an $(\alpha,\beta,1)$- well conditioned basis for $\hat{\M{A}}$. Hence by definition $\|\M{U}\|_1 \leq \alpha$ and $\forall \M{z} \in \mathbb{R}^{d+k}, \|\M{z}\|_\infty  \leq \beta \|\M{U}\M{z}\|_1 $ . So the first term in the infimum
\begin{align*}
    &\inf_{\M{Z}}\frac{\sum_{j}\|{\M{u}}_j^T{\M{Z}}\|_1}{\|{\M{u}}_i^T{\M{Z}}\|_1}\\
    &= \inf_{\M{Z}}\frac{\sum_{l=1}^{k}{\|\M{U}\M{z}^l\|_1}}{\sum_{l=1}^{k}|\M{u_i}^T\M{z}^l|}\\
    &\geq \frac{\frac{1}{\beta}\sum_{l=1}^{k}\|\M{z}^l\|_\infty}{\|\M{u_i}\|_1\sum_{l=1}^{k}\|\M{z}^l\|_\infty}\\
    &=\frac{1}{\beta\|\M{u_i}\|_1}
\end{align*}
Now for the second term in the infimum let us consider instead
\begin{align*}
    &\inf_{\M{Z}}\frac{\|\M{AY^{-1}Z}\|_1}{\|{\M{u}}_i^T{\M{Z}}\|_1}\\
    &=\inf_{\M{Z}}\frac{\|\M{UZ}\|_1}{\|{\M{u}}_i^T{\M{Z}}\|_1}\\
    &\geq\frac{1}{\beta\|\M{u_i}\|_1}
\end{align*}
Now $\|\M{AY^{-1}Z}\|_1 \leq \|\M{A}\|_{(1)}\|\M{Y^{-1}Z}\|_1$. Therefore
\begin{align*}
    &\inf_{\M{Z}}\frac{\|\M{Y^{-1}Z}\|_1}{\|{\M{u}}_i^T{\M{Z}}\|_1}\\
    &\geq \inf_{\M{Z}}\frac{\|\M{AY^{-1}Z}\|_1}{{\|\M{A}\|_{(1)}\|\M{u}}_i^T{\M{Z}}\|_1}\\
    &\geq\frac{1}{\beta\|\M{A}\|_{(1)}\|\M{u_i}\|_1}
\end{align*}
Combining both these 
\begin{equation*}
    m \geq \frac{1}{\beta\|\M{u_i}\|_1}\bigg(1+\frac{\lambda}{\|\M{A}\|_{(1)}}\bigg)
\end{equation*}
Now sensitivity of $i^{th}$ point is bounded as $s_i \leq \frac{1}{m} +\frac{1}{n}$. Therefore $s_i \leq \frac{\beta\|\M{u}_i\|_1}{1+ \frac{\lambda}{\|\M{A}\|_{(1)}}}+\frac{1}{n}$. So the sum of sensitivities is bounded by $S \leq \frac{\alpha\beta}{1+ \frac{\lambda}{\|\M{A}\|_{(1)}}} +1 $. This fact combined with fact that dimension of $\M{X}$ is $dk$ and applying Theorem \ref{th: linear core} proves the corollary
\end{proof}
\section{Experiments}
In this section we describe the empirical results supporting our claims. We performed experiments for the modified lasso and the RLAD problem. We generated a matrix $\M{A}$ of size $100000 \times 30$ in which there a few rows with high leverage scores. The construction of this matrix is described in \cite{yang2015implementing} where they refer to it as an NG matrix. The NG (non uniform leverage scores with good condition number) matrix is generated by the following matlab command: \begin{verbatim}
    NG=[alpha*randn(n-d/2,d/2) (10^-8)*rand(n-d/2,d/2);zeros(d/2,d/2) eye(d/2)];
\end{verbatim}
Here we used $alpha=0.00065$ to get a condition number of about $5$. A solution vector $\M{x} \in \mathbb{R}^{30}$ was generated randomly and a response vector $\M{b= Ax} + (10^{-5})\frac{\|\M{b}\|_2}{\|\M{e}\|_2}\M{e}$ where $\M{e}$ is a vector of noise. All the experiments were performed in \textbf{MatlabR2017a} on a machine with 16GB memory and 8 cores of 3.40 GHz. In our first experiment we solved the modified lasso problem on the entire data matrix $\M{A}$ and response vector $\M{b}$ to see the effect on sparsity of solution for different values of $\lambda$. We also solved the lasso and the ridge problem on the same data and compared the number of zeros in the solution vector for different values of $\lambda$. To take numerical precision into account we converted each coordinate of the vector having absolute value less than $10^{-6}$ to $0$ and then plotted the number of zeros against $\lambda$ for each problem. As seen in Figure \ref{ModifiedLASSOsparse}, just like lasso, the modified lasso also induces sparsity in the solution and hence can be used as an alternate to lasso. As expected, no sparsity was induced by ridge regression so it is not seen on the graph.
\begin{figure}[ht]
\vskip 0.2in
\begin{center}
\centerline{\includegraphics[width=\columnwidth]{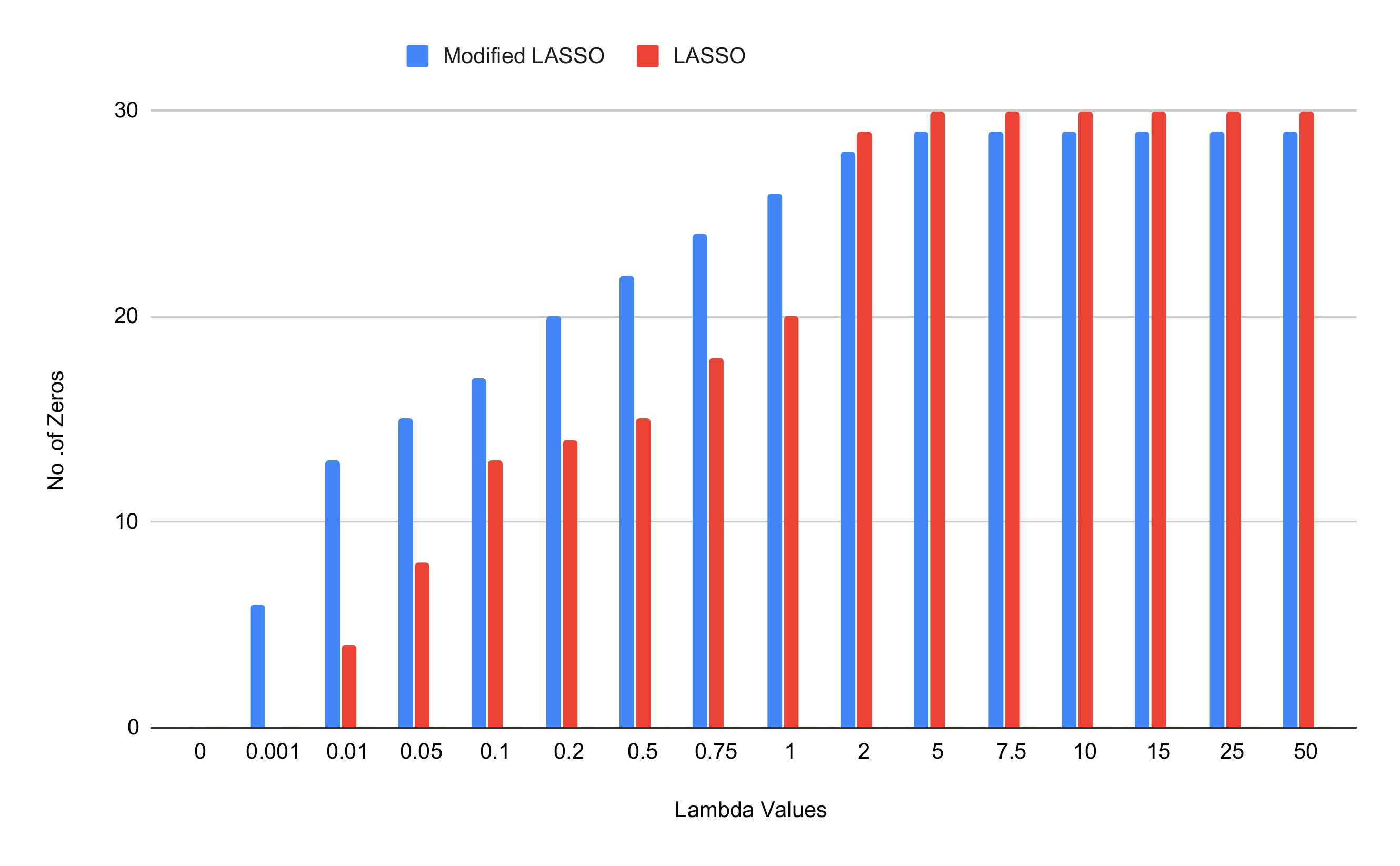}}
\caption{Sparsity Induced by Lasso and Modified Lasso}
\label{ModifiedLASSOsparse}
\end{center}
\vskip -0.2in
\end{figure}
In the next experiment we compared the performance of a subsample of data sampled using ridge leverage scores with the subsample sampled using uniform sampling. We solved the modified lasso problem on the original data and recorded the objective function value as $V1$. The ridge leverage scores were calculated exactly for the data points. Then we sampled a subsample of points using ridge leverage scores from $[\M{A} ~~ \M{b}]$. The sampled points were rescaled by the reciprocal of sample size times the probability of sampling the point. We solved the modified lasso on the smaller data and recorded the solution vector. This solution vector was plugged in the modified lasso function with the original large matrix $[\M{A} ~~ \M{b}]$ and the objective function value was noted as $V2$. We calculated the relative error as $\frac{|V1-V2|}{V1}$. Similar experiment was performed using uniform sampling. In Table \ref{Table 1} we report the values of relative error for different sample sizes. Here $\lambda=0.5$. The values reported are medians of 5 random experiments for each sample size. It can be seen that even for very small sample sizes, samples obtained using ridge leverage score performed much better than uniform sampling. In fact, uniform sampling showed significant improvement in its own relative error only at much larger sample sizes of the order of $1000$'s. For e.g. at size $1000$ error was $0.6816$ and at size $2500$ error was $0.6554$.
\begin{table}[t]
\caption{Relative error of different coreset sizes for Modified Lasso, $\lambda=0.5$}
\label{Table 1}
\vskip 0.15in
\begin{center}
\begin{small}
\begin{sc}
\begin{tabular}{lcccr}
\toprule
Sample Size & Ridge Leverage & Uniform Sampling \\
& Scores Sampling & \\
\midrule
\textbf{30}          & 0.059   & 0.8289     \\ 
\textbf{50}          & 0.044   & 0.8289     \\ 
\textbf{100}         & 0.031   & 0.8286     \\ 
\textbf{150}         & 0.028   & 0.8286    \\ 
\textbf{200}         & 0.013   & 0.8287     \\ 
\bottomrule
\end{tabular}
\end{sc}
\end{small}
\end{center}
\vskip -0.1in
\end{table}
In the next experiment we fixed a sample size of $200$ and solved modified lasso for various values of $\lambda$. We report the relative error for both ridge leverage scores based sampling and uniform sampling. We report the medians of 5 random experiments for each value of $\lambda$ for both schemes of sampling in Table \ref{Table 2}. Although both the objective function and the ridge leverage scores depend on the value of $\lambda$, still we can observe a decrease in value of relative error for the same sample size with increasing $\lambda$ except for one case. This effect is more pronounced in case of uniform sampling where the probabilities are independent of $\lambda$.
\begin{table}[t]
\caption{Relative error of different $\lambda$ values, $($sample size $=200)$ for Modified Lasso}
\label{Table 2}
\vskip 0.15in
\begin{center}
\begin{small}
\begin{sc}
\begin{tabular}{lcccr}
\toprule
$\lambda$ & Ridge Leverage & Uniform Sampling \\
& Scores Sampling & \\ 
\midrule
\textbf{0.1}  & 0.026  & 2.975  \\
\textbf{0.5}  & 0.013  & 0.828 \\
\textbf{0.75} & 0.017  & 0.576 \\
\textbf{1}    & 0.014  & 0.443  \\
\textbf{5}    & 0.007  & 0.103 \\
\bottomrule
\end{tabular}
\end{sc}
\end{small}
\end{center}
\vskip -0.1in
\end{table}
In our final experiment we compared the performance of sensitivity based sampling with uniform sampling for the RLAD in similar manner as for modified lasso. Here we used the sensitivity upper bounds we calculated for the RLAD problem. The median values of relative error of 5 experiments for each sample size is reported in Table \ref{Table 3}. Sensitivity based sampling clearly outperforms uniform sampling in this case.

\begin{table}[t]
\caption{Relative error of different coreset sizes for RLAD, $\lambda=0.5$}
\label{Table 3}
\vskip 0.15in
\begin{center}
\begin{small}
\begin{sc}
\begin{tabular}{lcccr}
\toprule
Sample Size & Sensitivity  & Uniform Sampling \\
& based Sampling & \\
\midrule
\textbf{30}  & 0.69 & 385.99\\
\textbf{50}  & 0.65 & 112.70 \\
\textbf{100} & 0.34 & 98.53 \\
\textbf{150} & 0.19 & 96.09  \\
\textbf{200} & 0.17 & 27.49\\
\bottomrule
\end{tabular}
\end{sc}
\end{small}
\end{center}
\vskip -0.1in
\end{table}

\subsection{Experiments on Real Data}
\begin{table}[h!]
\caption{Relative error of different coreset sizes for Modified Lasso on Real Data, $\lambda=1$}
\label{Table 4}
\vskip 0.15in
\begin{center}
\begin{small}
\begin{sc}
\begin{tabular}{lcccr}
\toprule
Sample Size  & Uniform Sampling & Ridge Leverage \\
& & Scores Sampling & \\
\midrule
\textbf{50}          & 0.0280   & 0.0267     \\ 
\textbf{100}          & 0.0184   & 0.0161     \\ 
\textbf{150}         & 0.0119   & 0.0082     \\ 
\textbf{300}         & 0.006   & 0.0048    \\ 
\textbf{500}         & 0.0042 & 0.0028  \\ 
\bottomrule
\end{tabular}
\end{sc}
\end{small}
\end{center}
\vskip -0.1in
\end{table}
We used the \textit{Combined Cycle Power Plant Data Set} \cite{tufekci2014prediction} available at the UCI Machine learning repository. The data set has $9567$ data points and $4$ features namely \textit {Temperture(T), Ambient Pressure (AP), Relative Humidity (RH) and Exhaust Vacuum (V)}. The task is to predict the \textit{Net hourly electrical energy output (EP)}. We used the modified lasso regression model to fit this data. We normalized each feature by its maximum value so that each feature lies in the same range. We solved the  modified lasso problem on the entire data for $\lambda=1$. We also applied our ridge leverage score based sampling and uniform sampling over the data and created coresets of different sizes. We solved modified lasso problem on the coresets and used the parameter vector obtained, with the original data and compared the errors. We did 5 experiments for each coreset size and obtained the relative error values.The median of relative errors for each coreset size is recorded in Table \ref{Table 4}. As can be seen, ridge leverage score sampling performs slightly better than uniform sampling. It is expected, as real data sets tend to have uniform leverage scores. However for data with nonuniform leverage scores our method is useful in practice too.\\

To the best of our knowledge, this is the first set of experiments evaluating performance of coresets for any regularized form of regression. These clearly verify that coresets constructed using importance scores can give performance comparable to original data at much smaller sizes.
\section{Conclusion and Discussion}
In this paper we have studied the coresets for regularized regression problem. We have shown that coresets smaller than unregularized regression might not be possible for all forms of regularized regression. We have introduced the modified lasso and shown a smaller coreset for it. Understanding the statistical properties of modified lasso is an interesting independent research direction. We have also shown smaller coresets for $\ell_p$ regression with $\ell_p$ regularization by upper bounding the sensitivity scores. Finally we have shown empirical results to support our theoretical claims.

One obvious open question is to see if tighter upper bounds are possible for the sensitivity scores for these problems. A detailed empirical study of coresets for regularized versions of regression under different settings is also required as has been done for $\ell_2$ and $\ell_1$ regression problems. Coresets for regression with other type of regularization or coresets for other regularized problems is an interesting area for future work. 
\section*{Acknowledgements}
We would like to thank Jayesh Choudhari and Kshiteej Sheth for helpful discussions. We are also grateful to the anonymous reviewers for their helpful feedback. Anirban Dasgupta acknowledges the kind support of the N. Rama Rao Chair Professorship at IIT Gandhinagar, the Google India AI/ML award (2020), Google Faculty Award (2015), and CISCO University Research Grant (2016). Supratim Shit acknowledges the kind support of Additional Fellowship from IIT Gandhinagar.
\bibliographystyle{unsrt}
\bibliography{references}
\end{document}